\definecolor{darkgreen}{RGB}{1,130,32}
\let\oldthanks\thanks
\renewcommand{\thanks}[1]{\let\footnotemark\relax\oldthanks{#1}}
\DeclareMathAlphabet{\mathmybb}{U}{bbold}{m}{n}
\newcommand{\1}{\mathmybb{1}}
\titlespacing*{\section}{0pt}{0.25\baselineskip}{0.2\baselineskip}
\titlespacing*{\subsection}{0pt}{0.2\baselineskip}{0.15\baselineskip}
\newcommand{\RN}[1]{%
  \textup{\uppercase\expandafter{\romannumeral#1}}%
}
\icmltitlerunning{Reward-Augmented Data Enhances Direct Preference Alignment of LLMs}
\begin{document}
\twocolumn[
\icmltitle{Reward-Augmented Data Enhances Direct Preference Alignment of LLMs}



\icmlsetsymbol{equal}{*}

\begin{icmlauthorlist}
\icmlauthor{Shenao Zhang}{equal,yyy}
\icmlauthor{Zhihan Liu}{equal,yyy}
\icmlauthor{Boyi Liu}{comp}
\icmlauthor{Yufeng Zhang}{comp}\\
\icmlauthor{Yingxiang Yang}{comp}
\icmlauthor{Yongfei Liu}{comp}
\icmlauthor{Liyu Chen}{comp}
\icmlauthor{Tao Sun}{comp}
\icmlauthor{Zhaoran Wang}{yyy}
\icmlaffiliation{yyy}{Northwestern University}
\icmlaffiliation{comp}{ByteDance}
\end{icmlauthorlist}

\icmlcorrespondingauthor{Shenao Zhang}{shenao@u.northwestern.edu}

\icmlkeywords{Machine Learning, ICML}

\vskip 0.3in
]



\printAffiliationsAndNotice{} 

\begin{abstract}
Preference alignment in Large Language Models (LLMs) has significantly improved their ability to adhere to human instructions and intentions. However, existing direct alignment algorithms primarily focus on relative preferences and often overlook the qualitative aspects of responses, despite having access to preference data that includes reward scores from judge models during AI feedback. Striving to maximize the implicit reward gap between the chosen and the slightly inferior rejected responses can cause overfitting and unnecessary unlearning of the high-quality rejected responses. The unawareness of the reward scores also drives the LLM to indiscriminately favor the low-quality chosen responses and fail to generalize to optimal responses that are sparse in data. To overcome these shortcomings, our study introduces reward-conditioned LLM policies that discern and learn from the entire spectrum of response quality within the dataset, helping extrapolate to more optimal regions. We propose an effective yet simple data relabeling method that conditions the preference pairs on quality scores to construct a reward-augmented dataset. The experiments across various benchmarks and diverse models demonstrate that our approach consistently boosts DPO by a considerable margin. Through comprehensive ablation studies, we demonstrate that our method not only maximizes the utility of preference data but also mitigates the issue of unlearning, demonstrating its broad effectiveness beyond mere data expansion. Our code is available at \url{https://github.com/shenao-zhang/reward-augmented-preference}.
\end{abstract}
\section{Introduction}
Reinforcement Learning from Human Feedback (RLHF) has recently seen remarkable success in aligning Large Language Models (LLMs) to follow instructions with human intentions. In this approach, AI-generated feedback serves as a stand-in for human preferences, assessing and ranking responses to prompts to construct a preference dataset. This dataset is then utilized in preference optimization algorithms to fine-tune LLMs. Among them, direct preference alignment \citep{rafailov2024direct,azar2023general,zhao2023slic,ethayarajh2024kto} that bypasses the need for an explicit reward model has garnered interest for its simplicity and cost efficiency. However, these algorithms mainly concern relative preferences and often overlook the quality of responses and their gaps, leading to limitations in their effectiveness.

Specifically, direct alignment algorithms such as DPO \citep{rafailov2024direct} focus on maximizing the implicit reward difference between accepted and rejected responses. This approach can lead to overfitting, as high-quality but rejected responses are unnecessarily unlearned \citep{adler2024nemotron}. Even worse, since the dataset provides only a sample estimate of true preferences, the rejected responses can actually be more aligned with human preferences than the accepted ones in expectation. Similarly, due to the unawareness of the responses' qualities, direct alignment will also result in the indiscriminate learning of the chosen responses, even when they are of low quality. As a result, the directly aligned LLMs often struggle to differentiate between responses of varying quality and fail to generalize effectively to more optimal or the highest-reward responses that are sparse in the preference data, which is another limitation. 

To address these issues, we propose learning reward-conditioned policies as a straightforward fix. By optimizing the LLM to generate responses conditioning on their qualities, the model is allowed to discern and leverage patterns within responses of varied quality. As a result, learning from both chosen and rejected responses alleviates the issue of unlearning high-quality rejected responses; distinguishing between varying-quality chosen responses alleviates the issue of indiscriminately accepting low-quality ones. By identifying common patterns in responses of similar quality and distinguishing them from those of differing quality, the LLM becomes more adept at generalizing to more optimal responses that are sparse in data.

With this motivation, we introduce an effective yet simple data relabeling method to construct reward-augmented datasets. We define a goal-conditioned reward using an indicator function that compares the goal reward with the actual quality score, such as the reward value given by the judge model during AI feedback. This allows us to relabel each preference pair, generating two new pairs conditioned on the reward goals of both the chosen and rejected responses. The resulting augmented dataset, which contains these newly conditioned pairs, can enhance the performance of existing direct alignment algorithms. Our method can be applied to any preference dataset and followed by off-the-shelf direct alignment algorithms to boost their performance. 

In experiments, we first apply our method on UltraFeedback \citep{cui2023ultrafeedback} and perform DPO \citep{rafailov2024direct} on this reward-augmented preference dataset by fine-tuning on various models, including Zephyr-7B-$\beta$ \citep{tunstall2023zephyr}, Mistral-7B-Instruct-v0.3 \citep{jiang2023mistral}, Qwen2-7B-Instruct \citep{yang2024qwen2}, Llama-3.1-8B-Instruct \citep{dubey2024llama}, Gemma-2-9B-It \citep{team2024gemma}, and SPPO \citep{wu2024self}. The results show that our method consistently boosts the performance of these models as well as their DPO models by a large margin on instruction-following benchmarks such as AlpacaEval 2.0 \citep{dubois2024length}, MT-Bench \citep{zheng2024judging}, and Arena-Hard-Auto \citep{li2024crowdsourced}. Our method also improves the average accuracy on a variety of academic benchmarks (GSM8K, GPQA, MUSR, TruthfulQA, BBH, and ARC). Moreover, our findings also demonstrate an improved utility of the preference data: a subsequent round of DPO using the reward-augmented data can still significantly enhance the model fine-tuned with DPO; relabeling the binarized preference dataset with the DPO implicit reward leads to further performance gains. Additional ablation studies also suggest that our method addresses the problem of unlearning and is superior not just due to the increased dataset size. When applied to on-policy data, our method enhances the DPO model, enabling it to surpass various baselines and achieve state-of-the-art performance on AlpacaEval 2.0.
\section{Background}
Consider an LLM $\pi\in\Delta_\mathcal{Y}^\mathcal{X}$ that takes the prompt $x\in\mathcal{X}$ as input and outputs the response $y\in\mathcal{Y}$, where $\mathcal{X}$ and $\mathcal{Y}$ are spaces of prompts and responses, respectively. Given $x\in\mathcal{X}$, a discrete probability distribution $\pi(\cdot\mid x)\in\Delta_\mathcal{Y}$ is generated, where $\Delta_\mathcal{Y}$ is the set of discrete distributions over $\mathcal{Y}$. We define the true human preference distribution as
\$
p^*(y_1\succ y_2 \mid x) &:= \EE_{h}\bigl[\1(h \text{ prefers } y_1 \text{ over } y_2 \text{ given } x)\bigr],
\$
where $h$ denotes the human rater and the expectation is over $h$ to account for the randomness of the human raters' choices. After pretraining and Supervised Fine-Tuning (SFT), Reinforcement Learning from Human or AI Feedback \citep{ouyang2022training,bai2022constitutional} is typically employed to enhance the ability of the language model to follow instructions with human preferences.

\paragraph{RL from AI Feedback (RLAIF).} The RLAIF framework involves two major steps: preference dataset construction with AI feedback and preference optimization. As a surrogate for human preference, AI feedback, including LLM-as-Judge \citep{zheng2024judging,cui2023ultrafeedback} and Reward-Model-as-Judge \citep{adler2024nemotron,dong2024rlhf}, can be used to rank responses and generate preference pairs. Specifically, consider the judge model $r(x, y): \mathcal{X}\times\mathcal{Y}\rightarrow \mathbb{R}$ that outputs a scalar reward value representing the quality of $y$ under $x$.
For each prompt $x\in\mathcal{X}$, two responses, $y_1$ and $y_2$, are independently sampled---either from the same reference model \citep{xiong2024iterative,wu2024self} or several different models \citep{starling2023,zhang2024self}. Then $r(x, y_1)$ and $r(x, y_2)$ are evaluated to determine the preferred response $y_w = \argmax_{y \in \{y_1, y_2\}} r(x, y)$ and dispreferred response $y_l = \argmin_{y \in \{y_1, y_2\}} r(x, y)$. By sampling responses and ranking them for a set of $N$ prompts, we get a preference dataset: $\mathcal{D}_N=\{(x^i, y_w^i, y_l^i)\}_{i=1}^N$. For the simplicity of our discussions, we assume that the reward function $r$ is bounded in $[0,r_{\text{max}}]$.

\paragraph{Direct Alignment from Preference.} The objective for the LLM $\pi\in\Delta_\mathcal{Y}^\mathcal{X}$ is to maximize the KL-regularized expected reward. Recent work \citep{azar2023general,zhao2023slic,tunstall2023zephyr,ethayarajh2024kto} proposed to align the LLM directly with the paired data by deriving the preference loss as a function of the LLM by the change of variables. Among them, the Direct Preference Optimization (DPO) \citep{rafailov2024direct} loss has the following form:
\$
&\mathcal{L}_{\text{DPO}}(\pi;\mathcal{D}_N) = -\EE_{(x, y_w, y_l)\sim\mathcal{D}_N}\biggl[\log\\
&\qquad\qquad\sigma\biggl(\beta\log\frac{\pi(y_w\mid x)}{\pi_{\text{ref}}(y_w\mid x)} - \beta\log\frac{\pi(y_l\mid x)}{\pi_{\text{ref}}(y_l\mid x)}\biggr)\biggr],
\$
where $\beta$ is a hyperparameter corresponding to the KL divergence regularization, $\sigma(\cdot)$ is the logistic function, and $\pi_{\text{ref}}$ is some reference LLM policy, such as the SFT model.
\section{Reward-Conditioning Addresses Limitations of Direct Preference Alignment}\label{sec_3}
\subsection{Limitations of Direct Alignment from Preference}
\label{sec_lim}
We will first demonstrate the limitations of vanilla direct alignment over the preference data.

\vspace{-0.15cm}
\paragraph{High-Quality Rejected Responses are Unnecessarily Unlearned.} The dataset $\mathcal{D}_N$ often contains preference pairs where the rejected response $y_l$ is only marginally worse than the chosen one $y_w$. Direct alignment algorithms, however,  primarily focus on relative preferences and are unaware of the responses' quality values and gaps. Striving to maximize the reparameterized reward gap between the chosen and rejected responses will risk overfitting and unnecessary ``unlearning", i.e., probability decrease, of high-quality responses, potentially diminishing the model's performance by discarding valuable alternatives. Furthermore, in such a finite data regime where only a sample estimate of the true preference is accessible, it can be very possible that $p^*(y_l \succ y_w\mid x) > 0.5$, i.e., $y_l$ is in fact more preferred than $y_w$ in expectation. This issue becomes even more pronounced when the preference data generated with the imperfect judge model is noisy.
\vspace{-0.2cm}
\begin{table}[htb]
\centering
\setlength{\tabcolsep}{4pt}
\begin{tabular}{@{}l|*{2}{>{\centering\arraybackslash}p{1.3cm}}@{}}
\toprule
response & $y_1$ & $y_2$ \\
\midrule
$r(x, y)$ & 9 & 8\\
$\mathcal{D}_{N=1}$ & \multicolumn{2}{c}{$\{y_1 > y_2\}$} \\
\midrule
$\pi^*(y\mid x)$ & 1 & 0 \\
\midrule
$\pi^*(y\mid x , g=9)$ & 1 & 0 \\
$\pi^*(y\mid x , g=8)$ & 0 & 1 \\
\bottomrule
\end{tabular}
\caption{High-quality rejected responses such as $y_2$ can be unnecessarily unlearned: $\pi^*(\cdot\mid x)$ deterministically generates $y_1$. Reward-conditioned policies learn both responses and are easier to generalize to $g=10$ with features extracted from $g=8$ and $9$.\vspace{-0.2cm}}
\label{tab_unlearn}
\end{table}

We illustrate this limitation with the example in Table \ref{tab_unlearn}, where we define the maximum reward $r_\text{max}$ as $10$. For $\mathcal{D}_{N=1}$ that contains a single preference pair\footnote{For simplicity, we write $(x, y_w, y_l)\in\mathcal{D}_N$ as $y_w \succ y_l$.} with reward $r(x, y_1)=9$ and $r(x, y_2)=8$, the optimal policy learned from $\mathcal{D}_{N=1}$ is $\pi^*(y_1\mid x)=1$. This causes the model to avoid generating $y_2$, a response of nearly equivalent quality.

\vspace{-0.15cm}
\paragraph{Low-Quality Chosen Responses are Indiscriminately Learned.} For a similar reason, direct alignment algorithms also  indiscriminately reinforce the chosen responses. As illustrated in Table \ref{tab_learn}, when $\mathcal{D}_{N=2}$ contains two preference pairs, where one of the chosen responses, $y_2$, is of low quality, $\pi^*$ still indiscriminately generates $y_2$ with an arbitrary probability $0\leq a\leq 1$, i.e., $\pi^*(y_2\mid x)=a$.

\vspace{-0.15cm}
\paragraph{Reward Sparsity.} Preference data often contains responses that, despite being preferred in pairwise comparisons, exhibit substantial variation in quality. As a result, the optimal responses---those associated with the highest reward value $r_{\text{max}}$---are sparse in the dataset. Since direct alignment algorithms do not account for these reward values, the trained model struggles to differentiate between responses of varying quality and fails to generalize effectively to the sparse optimal responses. 

\vspace{-0.2cm}
\begin{table}[htb]
\centering
\setlength{\tabcolsep}{4pt}
\begin{tabular}{@{}l|*{3}{>{\centering\arraybackslash}p{1.1cm}}@{}}
\toprule
response & $y_1$ & $y_2$ & $y_3$ \\
\midrule
$r(x, y)$ & 9 & 1 & 0 \\
$\mathcal{D}_{N=2}$ & \multicolumn{3}{c}{$\{y_1> y_3\, ,\, y_2> y_3\}$} \\
\midrule
$\pi^*(y\mid x)$ & $1-a$ & $a$ & 0 \\
\midrule
$\pi^*(y\mid x, g=9)$ & 1 & 0 & 0 \\
$\pi^*(y\mid x, g=1)$ & 0 & 1 & 0 \\
$\pi^*(y\mid x, g=0)$ & 0 & 0 & 1 \\
\bottomrule
\end{tabular}
\caption{Low-quality chosen responses such as $y_2$ can be learned: $\pi^*$ indiscriminately generates $y_1$ and $y_2$. Reward-conditioned policies distinguish the differences and learn the behaviors corresponding to different reward scores.\vspace{-0.4cm}}
\label{tab_learn}
\end{table}

\subsection{Reward-Conditioned Policies Learn from the Full Spectrum of Response Quality} 
A straightforward way to address the limitations of direct alignment algorithms---specifically, their inability to account for the quality of responses---is to optimize a reward-conditioned policy. In this approach, the LLM policy is trained to generate responses corresponding to different reward values, enabling it to become aware of and adapt to these reward distinctions. By doing so, the LLM not only learns the patterns associated with the preferred responses but also retains valuable information from the rejected ones, preventing the unlearning of high-quality rejected responses. For example, in Table \ref{tab_unlearn}, reward-conditioned policies (the last two rows) learn to generate both $y_1$ and $y_2$, instead of unlearning $y_2$. This reward-based conditioning also enhances the model’s ability to differentiate between responses of varying quality, even if both are preferred over a rejected alternative, as illustrated in Table \ref{tab_learn}. Besides, by extracting common patterns across responses with different quality levels, the LLM becomes more generalizable and is capable of generating the highest-quality responses with reward $r_{\text{max}}$ (e.g., $10$), which are often sparse in the training data.
\section{Method}
\begin{figure*}[htbp]
    \centering
    \includegraphics[width=\linewidth]{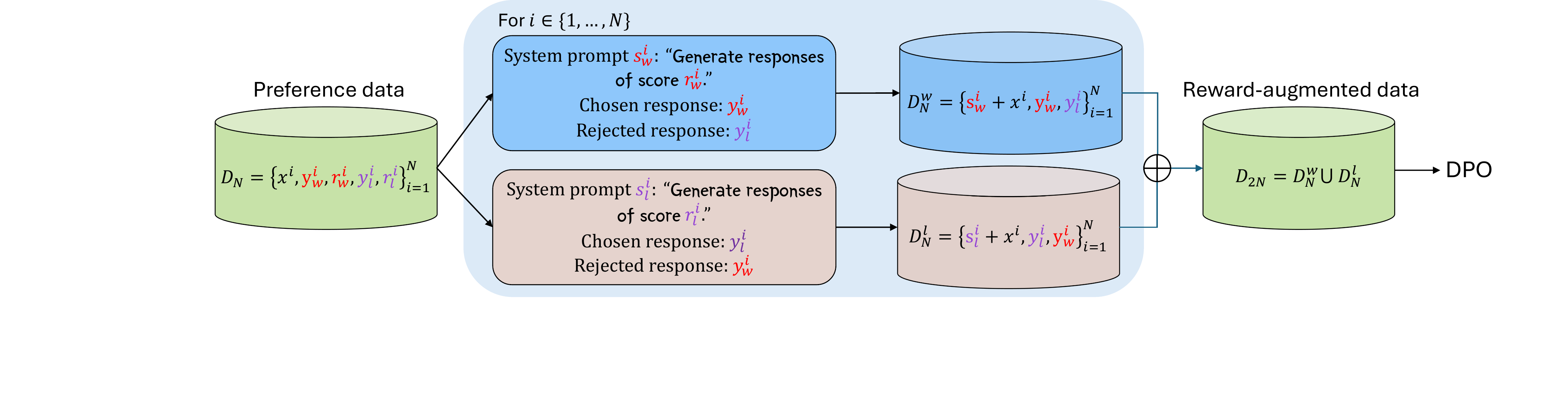}
  \caption{Illustration of our method: construction of reward-augmented preference datasets followed by direct alignment algorithms.}
    \label{fig_illu}
\end{figure*}
With the above motivation, we propose a data relabeling method that constructs a reward-augmented dataset by conditioning the preference pairs on the reward values given by the judge model $r$. Specifically, we define the goal-conditioned reward function $R(x, y,g) = - (g-r(x,y))^2$ as a function of the reward function $r$. The objective of the reward-conditioned policy $\pi(y\mid x ,g)$ is thus to minimize the square difference between the goal reward $g$ and the response reward $r(x, y)$, which is equivalent to maximizing the goal-conditioned reward $R(x, y,g)$, i.e.,
\vspace{-0.18cm}
\#
    \label{eq_obj}
&\min_\pi\EE_{g,x\sim\mathcal{D}_N, y\sim\pi(\cdot\mid x , g)}\bigl[(g-r(x,y))^2\bigr]  \notag\\
&\qquad\qquad= \max_\pi\EE_{g, x\sim\mathcal{D}, y\sim\pi(\cdot\mid x , g)}\bigl[R(x, y,g)\bigr].
\#
To optimize the RHS of \eqref{eq_obj}, we observe that under the new goal-conditioned reward metric $r$, we have
\vspace{-0.1cm}
\$
    R(x, y_w^i,g=r_w^i) &= 0 > R(x, y_l^i,g=r_w^i) = - (r_w^i-r_l^i)^2, \\
    R(x, y_l^i,g=r_l^i) &= 0 > R(x, y_w^i,g=r_l^i) = - (r_w^i-r_l^i)^2.
\$

\vspace{-0.2cm}
Thus, each pair can be relabeled to create two new preference pairs based on two distinct goals: when $g=r_w^i$, $y_w^i\succ y_l^i$; when $g=r_l^i$, $y_l^i\succ y_w^i$. Then any direct alignment algorithm can be applied to this new goal-conditioned preference dataset. Compared to fine-tuning on the original dataset $\mathcal{D}_N$, the model learns to capture not only desirable behaviors but also undesirable ones from the reward-augmented dataset. This approach helps identify patterns across high- and low-quality responses, enabling the LLMs to discern and learn from the entire spectrum of response quality and extrapolate to more optimal responses at inference time, by conditioning on higher reward goals.

We illustrate our method in Figure \ref{fig_illu}. For each preference pair with index $i$ in $\mathcal{D}_N$, two goals are defined, corresponding to the reward values of the chosen response $y_w^i$ and the rejected response $y_l^i$. Specifically, under the first goal $g=r_w^i$, the relabeled rewards are $R(x, y_w^i,g)=0$ and $R(x, y_l^i,g)= - (r_w^i-r_l^i)^2$. The original ranking of responses remains the same, except that the LLM is preference optimized conditioned on $g=r_w^i$. Similarly, under the second goal $g=r_l^i$, the relabeled rewards are $R(x, y_l^i,g)=0$ and $R(x, y_w^i,g)=-(r_w^i-r_l^i)^2$. Thus, the chosen and rejected responses are reversed as $y_l^i$ and $y_w^i$. By generating preference pairs conditioned on the goal reward for both the chosen and rejected responses, we obtain a reward-augmented dataset of size $2N$. Finally, this new dataset can be used with any direct alignment algorithm, such as DPO.

In this work, we implement the reward-conditioned policy $\pi(y\mid x ,g)$ as the LLM with a system prompt (or a prefix before the user prompt $x$ if system prompts are not supported by the LLM) such as ``generate responses of score $g$". At inference time, the LLM is conditioned on the optimal goal $g^\star=r_{\text{max}}$ that is the highest possible reward value, e.g., $g^\star=r_{\text{max}}=10$, to generate the responses.

Moreover, we provide the following theoretical guarantees for our method (see \ref{appendix_thm} for a formal description).
\begin{theorem}[Informal version]\label{thm:main} Let $J(\pi) = \mathbb{E}_{x\sim d_0,y\sim \pi(\cdot|x,g^\star)}\big[R(x,y,g^\star)\big]$ be the performance measure, where $R$ denotes the ground-truth goal-conditioned reward function and $g^\star$ denotes the optimal goal. Under mild assumptions, the policy $\widehat{\pi}$ optimized from the reward-augmented DPO with an SFT regularizer satisfies that with probability at least $1-\delta$,
    \begin{align}
        &J(\pi^*) - J(\widehat{\pi})\!\leq\! \sqrt{\frac{1}{N}}\cdot\notag\\
        &\qquad \bigg\{\frac{\sqrt{6}}{4} \big(1+\exp(B)\big)^2 \big(\big(C_{\mu_{\bar{\cD}}}(\cR; \pi^\star, \pi_\text{sft})\big)^2+1\big) \iota \notag\\&\qquad+  \mathbb{E}_{x\sim d_0}\Big[\mathrm{KL}\big(\pi^\star(\cdot|x, g^\star)\|\pi_{\mathrm{ref}}(\cdot|x, g^\star)\big) \Big]\bigg\},
    \end{align}
    where $\pi^* = \argmax_\pi J(\pi)$ and $\iota =  \sqrt{\log\left(\cN_{\varepsilon}(\cR,\|\cdot\|_{\infty})/\delta\right)}$ with $\varepsilon = (6\cdot(1+e^B)\cdot N)^{-1}$.
    Here, $N$ denotes the number of preference pairs in $\cD$, $B$ denotes the upper bound of the reward models, and the partial coverage coefficient $C_{\mu_{\bar{\cD}}}(\cR; \pi^\star,\pi_\text{sft})$ is defined in Assumption~\ref{as: coverage}.
\end{theorem}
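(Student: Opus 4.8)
The plan is to recast reward-augmented DPO with the SFT regularizer as a \emph{pessimistic} maximum-likelihood estimation (MLE) problem over the implicitly parametrized reward class $\cR$ and then carry out an offline-alignment suboptimality analysis in the spirit of the ``SFT loss is an implicit adversarial regularizer'' framework, here applied to the goal-conditioned reward $R$ and the augmented dataset $\bar{\cD}$ (the formal assumptions---Bradley--Terry modeling of the relabeled comparisons, realizability $R\in\cR$, boundedness by $B$, and the partial-coverage Assumption~\ref{as: coverage}---are collected in \ref{appendix_thm}). The first step is the DPO change of variables: setting $r_\pi(x,y,g):=\beta\log\frac{\pi(y\mid x,g)}{\pi_{\mathrm{ref}}(y\mid x,g)}$, the reward-augmented DPO loss on $\bar{\cD}$ equals the Bradley--Terry negative log-likelihood of the relabeled comparisons under the reward $r_\pi$, while the added SFT term becomes, after the same substitution, a data-dependent penalty on $r_\pi$. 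This penalty supplies pessimism: $\widehat\pi$ corresponds to a reward estimate $\widehat r\in\cR$ that (a) nearly maximizes the Bradley--Terry likelihood on $\bar{\cD}$ and (b) does not overestimate the reward along the direction separating $\pi^\star$ from $\pi_\text{sft}$.

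Second comes the statistical step, an MLE generalization bound for the Bradley--Terry model. I would take an $\varepsilon$-cover of $\cR$ in $\|\cdot\|_\infty$ with $\varepsilon=(6(1+e^B)N)^{-1}$, union bound over the cover, and apply a Freedman-type martingale concentration to the per-comparison log-likelihood ratios, using that the Bradley--Terry log-likelihood has curvature of order $(1+e^B)^{-2}$ on the reward range $[-B,B]$. This yields, with probability at least $1-\delta$, a bound of order $\iota^2/N$ on the in-sample squared likelihood distance between $\mathbb{P}_{\widehat r}$ and $\mathbb{P}_R$ under $\mu_{\bar{\cD}}$, where $\iota=\sqrt{\log(\cN_{\varepsilon}(\cR,\|\cdot\|_{\infty})/\delta)}$ and the stated $\varepsilon$ is exactly what balances the cover-discretization error against this concentration term. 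Passing through the Bradley--Terry link then converts this into an $L^2(\mu_{\bar{\cD}})$ estimate of $\widehat r-R$ carrying the $1+e^B$ factors.

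Third, I would use the value decomposition of $J(\pi^\star)-J(\widehat\pi)$ into (i) $\EE_{\pi^\star}[R-\widehat r]$, (ii) $\EE_{\pi^\star}[\widehat r]-\EE_{\widehat\pi}[\widehat r]$, and (iii) $\EE_{\widehat\pi}[\widehat r-R]$, all at the goal $g^\star$, where $\EE_\pi$ abbreviates $\EE_{x\sim d_0,\,y\sim\pi(\cdot\mid x,g^\star)}$. By optimality of $\widehat\pi$ for the $\beta$-KL-regularized reward $\widehat r$, term (ii) is bounded by $\EE_{x\sim d_0}\big[\mathrm{KL}\big(\pi^\star(\cdot\mid x,g^\star)\|\pi_{\mathrm{ref}}(\cdot\mid x,g^\star)\big)\big]$ (with the regularization strength folded into the overall rate), which is precisely the separate KL term in the statement. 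Terms (i) and (iii) are reward-estimation errors at $(\pi^\star,g^\star)$ and $(\widehat\pi,g^\star)$; combining the pessimism from the first step with Assumption~\ref{as: coverage} transports the $L^2(\mu_{\bar{\cD}})$ estimate from the second step to these target distributions, and collecting the constants---the $\sqrt{6}/4$ from the martingale bound, the powers of $1+e^B$ from the link passage, and the $\big(C_{\mu_{\bar{\cD}}}(\cR;\pi^\star,\pi_\text{sft})\big)^2+1$ from merging the $\pi^\star$ and $\pi_\text{sft}$ coverage contributions---yields the claimed inequality.

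The main obstacle is the first step: verifying rigorously that the SFT penalty, after the DPO substitution on the $2N$ relabeled pairs, supplies pessimism of the \emph{correct sign and magnitude} along the $\pi^\star$-versus-$\pi_\text{sft}$ gap rather than merely generic shrinkage, and simultaneously bookkeeping the correlation structure of $\bar{\cD}$---each original preference spawns one relabeled comparison at $g=r_w^i$ and one at $g=r_l^i$---so that the martingale filtration and the definition of $\mu_{\bar{\cD}}$ are arranged to keep the concentration at the $1/\sqrt{N}$ rate with the constants above.
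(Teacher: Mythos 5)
Your proposal follows essentially the same route as the paper's proof: the reduction of the SFT-regularized, reward-augmented DPO objective to a pessimistic minimax MLE over the goal-conditioned reward class (the implicit-adversarial-regularizer reduction of \citet{liu2024provably}), the covering-number concentration for the Bradley--Terry negative log-likelihood with the $(1+e^{B})^{-2}$ sigmoid curvature and $\varepsilon=(6(1+e^{B})N)^{-1}$, the transport to the target distributions via the partial-coverage coefficient, and the AM--GM balancing with $\eta,\beta\asymp N^{-1/2}$. The one point to tighten is your term (iii): Assumption~\ref{as: coverage} covers only $\pi^\star$ and $\pi_{\mathrm{sft}}$, not $\widehat{\pi}$, so $\mathbb{E}_{x\sim d_0,\,y\sim\widehat{\pi}}[\widehat{r}-R^{\star}]$ cannot be obtained by transporting the $L^2(\mu_{\bar{\mathcal{D}}})$ estimate and must instead be absorbed entirely by the maximin structure, as in the paper's Term (B), which is bounded by $\eta^{-1}\mathcal{L}(R^{\star},\bar{\mathcal{D}})$ using realizability of $R^{\star}$.
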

The detailed proof is provided in \ref{app:proof}. The above theorem shows that our method attains global convergence to the optimal policy and the suboptimality decays at the order of $N^{-1/2}$ ($N$ denotes the size of the reward-augmented preference dataset), which provides a theoretical justification for the strong empirical performance of the introduced reward-augmented DPO. Unlike prior works on goal-conditioned RL with supervised learning \citep{yang2022rethinking,ghosh2019learning}, which typically establish weaker results such as local performance improvements or the optimization of a lower bound on $J(\pi)$, our analysis guarantees global convergence to the optimal policy. This distinction underscores the significance of integrating DPO-like methods with goal-conditioned approaches.

\section{Related Work}
\paragraph{Preference Dataset Construction.} In order for the LLMs to follow instructions and better align with human intents, it is common practice to build a preference dataset containing a set of prompts and a pair of responses for each prompt, whose qualities are ranked by humans \citep{ouyang2022training} or judge models \citep{bai2022constitutional}. A popular pipeline \citep{cui2023ultrafeedback,tunstall2023zephyr,wang2024far,ivison2023camels,starling2023} for constructing offline (i.e., fixed) datasets involves sampling off-policy responses from \textit{various} LLMs for each prompt in the hope to increase the response diversity. The preference data can also be generated online \citep{guo2024direct} or iteratively \citep{bai2022training,xu2023some,gulcehre2023reinforced,snorkelaipair,xiong2023gibbs,dong2024rlhf,calandriello2024human,rosset2024direct} by sampling and ranking on-policy responses from the training LLM. Recent works \citep{zhang2024self,cen2024value,xie2024exploratory} have also proposed systematically exploring the responses online and actively eliciting the preference. The proposed method in this paper is orthogonal to the construction ways of the preference data and can be applied to any dataset created either off-policy or on-policy.

\paragraph{Preference Optimization.} Preference optimization methods generally follow two approaches. The first involves learning a point-wise reward model, such as the Bradley-Terry model, and using RL algorithms like PPO \citep{schulman2017proximal,zheng2023secrets,xu2024dpo} or REINFORCE \citep{williams1992simple,li2023remax,ahmadian2024back}, to maximize the KL-regularized expected reward. The second approach is direct alignment \citep{rafailov2024direct,azar2023general,zhao2023slic,ethayarajh2024kto,liu2024provably}, which gets rid of a separate reward model that is computationally costly to train. In this work, we mainly focus on the limitations of direct alignment algorithms, particularly their unawareness of the quality aspects of responses. For PPO-style alignment algorithms that fit and maximize an explicit reward, preference data is only used to learn the reward model, and policy training is performed in an online manner, where responses are sampled from the LLM and their reward values directly play a role during the RL optimization. This avoids drawbacks inherent to direct alignment methods, as detailed in Section \ref{sec_lim}.

\begin{figure*}[htb]
\centering
\subfigure[AlpacaEval 2.0 results. Left: Length-Controlled (LC) win rates. Right: Win rates.]{
\label{transfer_pac}
\includegraphics[width=0.35\linewidth]{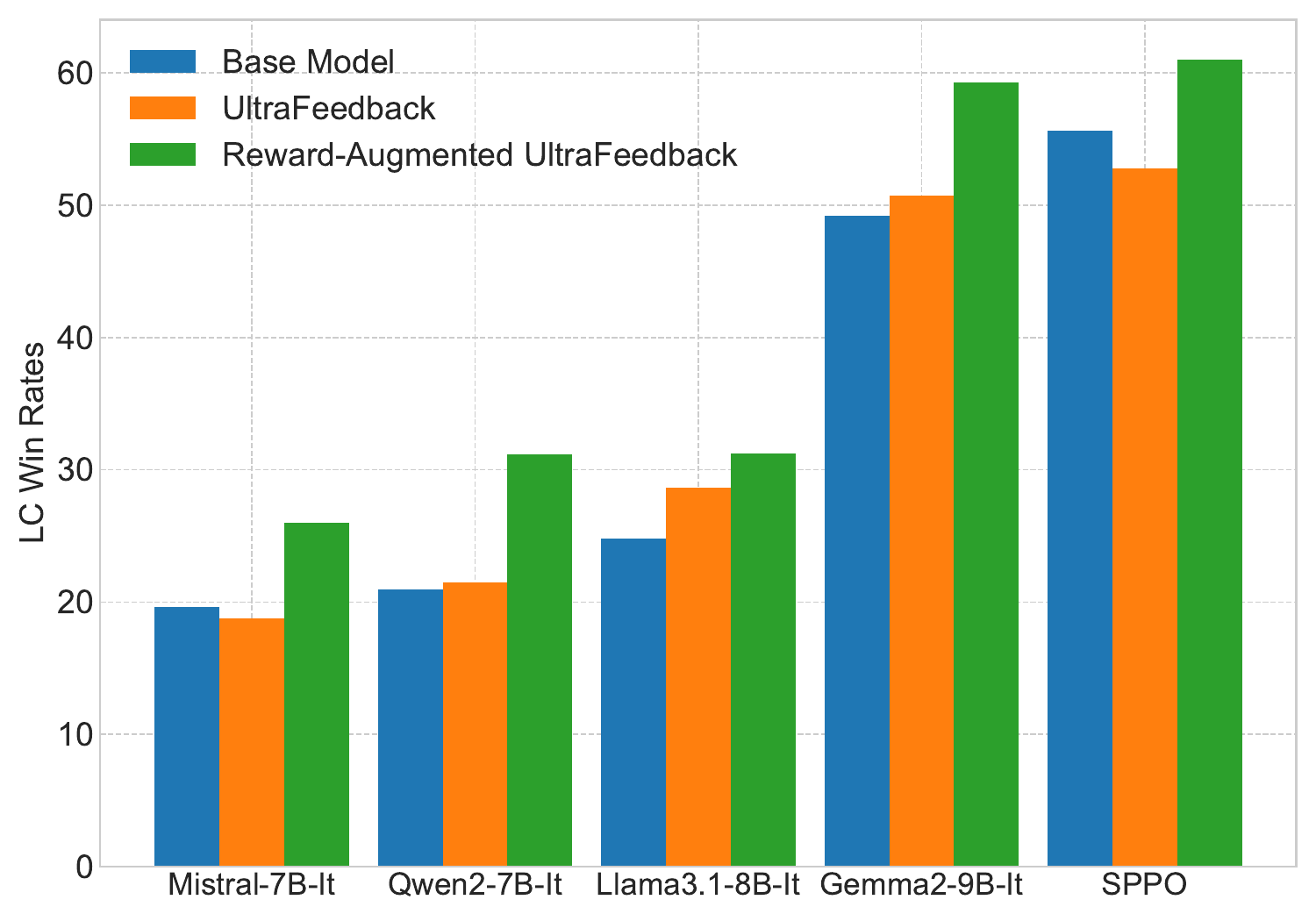}
\hspace{0.05in}
\includegraphics[width=0.35\linewidth]{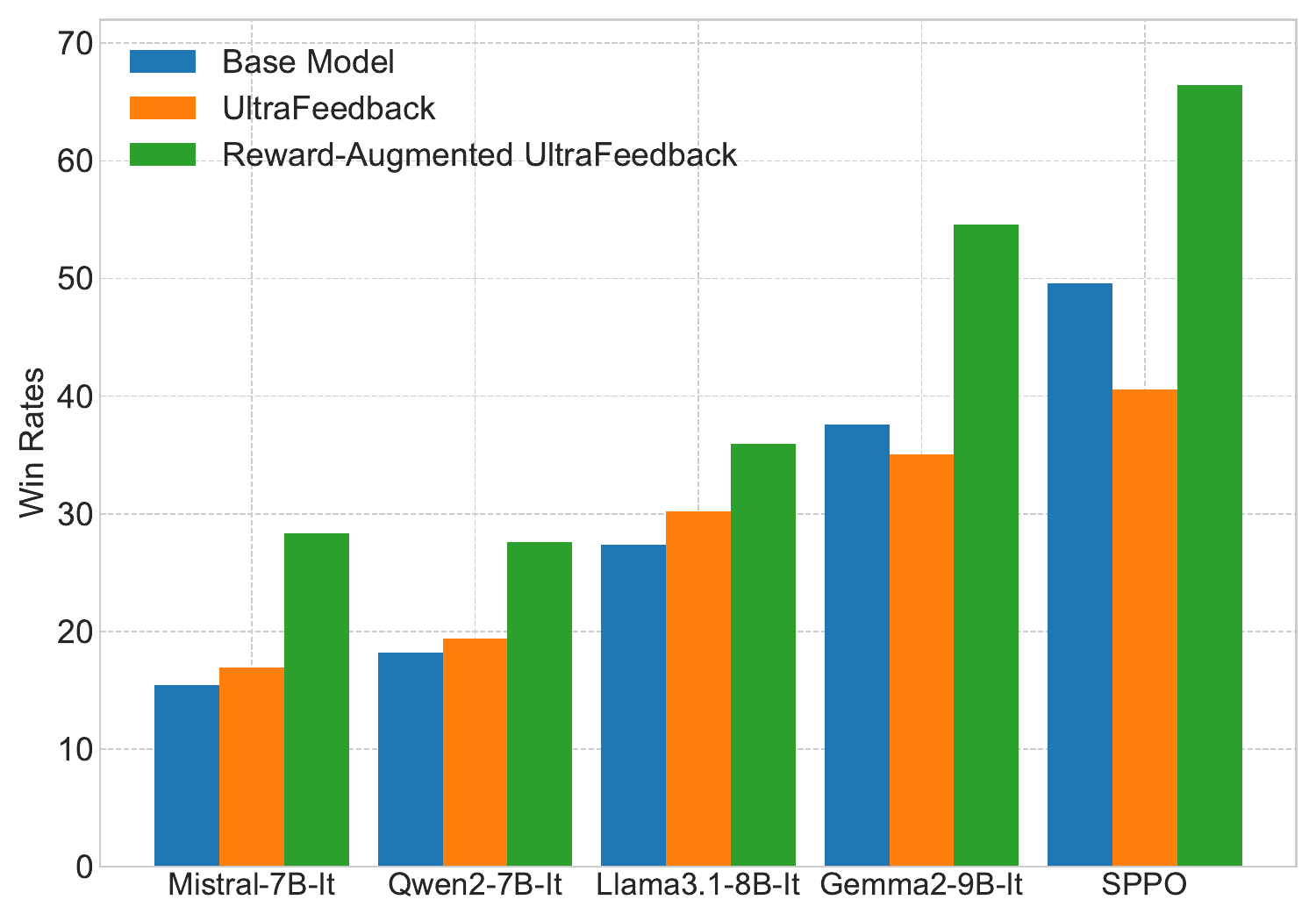}
}
\subfigure[MT-Bench average score.]{
\centering
\label{complex}
\includegraphics[width=0.35\linewidth]{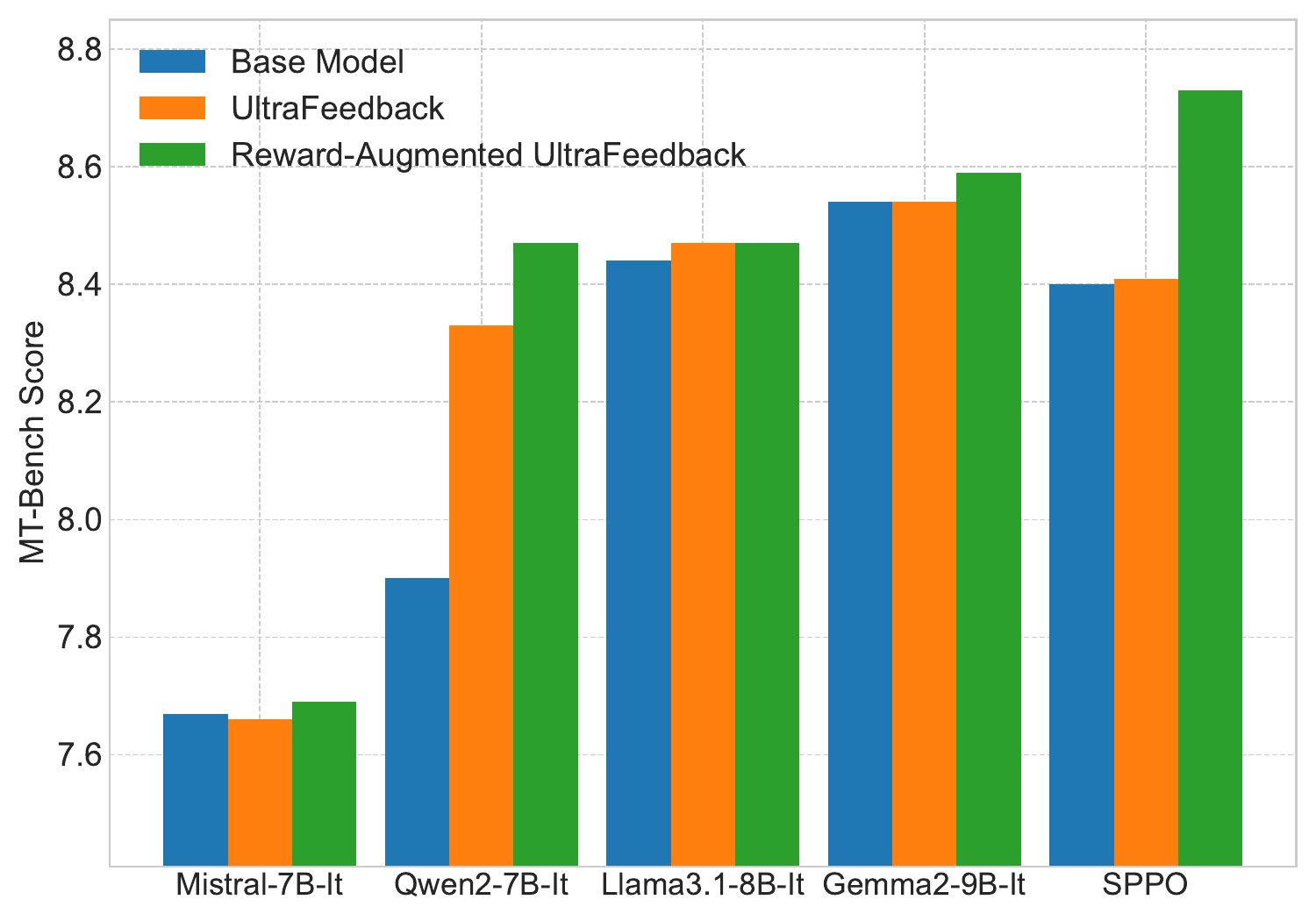}
}
\subfigure[Arena-Hard-Auto score.]{
\centering
\label{sparse}
\includegraphics[width=0.35\linewidth]{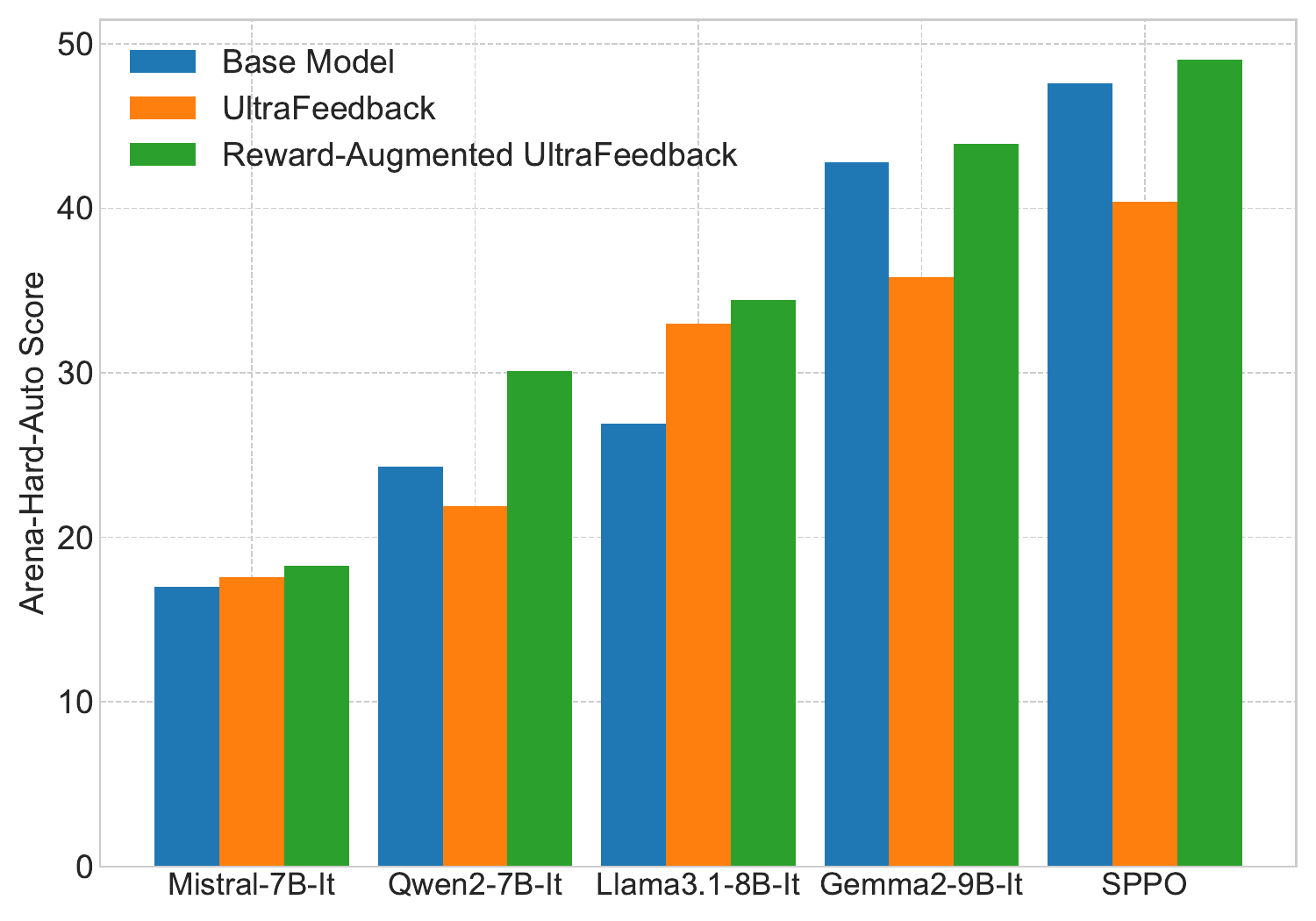}
}
\vspace{-0.1cm}
\caption{Instruction-following performance of the base models, the models trained with DPO on UltraFeedback, and the models trained with DPO on reward-augmented UltraFeedback on AlpacaEval 2.0, MT-Bench, and Arena-Hard-Auto benchmarks. Our method demonstrates considerable improvements consistently across all benchmarks. The complete table is deferred to Appendix \ref{sec_full}.\vspace{-0.1cm}}
\label{lmasjudge}
\end{figure*}

\paragraph{Conditional LLM Fine-Tuning.} Conditioning LLMs during training has proven effective for aligning responses with specific human objectives. SteerLM \citep{dong2023steerlm,wang2023helpsteer} extends SFT by conditioning the LLM on the multi-dimensional annotated attributes in data, such as humor and toxicity, in order to steer model responses with user customizability. Directional Preference Alignment (DPA) \citep{wang2024arithmetic} proposed a variant of rejection sampling fine-tuning \citep{yuan2023scaling,dong2023raft} that conditions on the direction of the multi-objective reward, i.e., a user-dependent linear combination of the reward attributes (helpfulness and verbosity in their experiments), that represents diverse preference objectives. These methods aim to train a single LLM that can flexibly adjust to various user preference profiles. On the contrary, our method targets the limitations of direct alignment algorithms by introducing reward-augmented relabeling. This also differs from Conditioned-RLFT \citep{wang2023openchat}, which supervised learns class-conditioned policies with data source information, and \cite{cai2023ulma,chen2024noise} that focus on unified alignment with binarized or reward datasets. Reward-aware Preference Optimization (RPO), introduced in Nemotron-4 \citep{adler2024nemotron}, attempts to approximate the reward gap using the implicit reward and is motivated to resolve the unlearning issues of DPO, which our work also addresses. However, we show that more limitations beyond unlearning can be simply fixed with reward-conditioned LLMs and propose an easy-to-implement data relabeling method without algorithm changes. \cite{pang2024iterative} addressed DPO’s tendency to reduce the probability of the chosen response by incorporating an NLL loss. In contrast, our work focuses on a different limitation of DPO—its tendency to overlook qualitative aspects of responses—and proposes a data relabeling approach that requires no algorithm changes. It also differs from conditional sequence modeling based on SFT \citep{chen2021decision,lloret2024towards}. Due to the lack of textual feedback in UF, we empirically compare with the reward feedback variants of \cite{lloret2024towards}, including SteerLM and DPA. Similar to our work, \cite{kim2024margin} also investigates how preference optimization can overlook qualitative aspects of responses. However, their focus is on overfitting to preference data, and they propose incorporating quality margins into the optimization objective. In contrast, our approach does not involve algorithmic modifications, but rather directly targets the limitations caused by the unawareness of the
reward scores. Our work also differs from \cite{park2024disentangling}, which introduces a constraint-based regularization term specifically aimed at mitigating verbosity bias.

\section{Experiments}
\subsection{Reward-Augmented Data Boosts DPO Performance}
We begin by conducting experiments to demonstrate that applying our method on DPO to fixed offline preference datasets leads to consistent performance improvements. Our setups are listed as follows.

\vspace{-0.2cm}
\paragraph{Setup.} We adopt the UltraFeedback \citep{cui2023ultrafeedback} preference dataset for this experiment. Specifically, UltraFeedback contains reward values scored by GPT-4 (LLM-as-Judge), which are ranged between $1$ and $10$ for each of the preference pairs. Our method constructs reward-augmented data by conditioning on these judge values. We choose to fine-tune on various open-weight LLMs, including Mistral-7B-Instruct-v0.3, Qwen2-7B-Instruct, Llama-3.1-8B-Instruct, Gemma-2-9B-It, and SPPO (fine-tuned from Gemma2-9B-It). The hyperparameters and prompts that we use are listed in Appendix \ref{app_setup}. 

\vspace{-0.2cm}
\paragraph{Results.} We first report the performance of the trained models on instruction-following benchmarks that use strong LLMs such as GPT-4 as judges, including AlpacaEval 2.0 \citep{dubois2024length}, MT-Bench \citep{zheng2024judging}, and Arena-Hard-Auto \citep{li2024crowdsourced}. The results are shown in Figure \ref{lmasjudge}.

\begin{table*}[ht]
\centering
\begin{tabular}{l|ccccccc}
\hline
Model     & \begin{tabular}[c]{@{}c@{}}GSM8K\\(8-s CoT)\end{tabular} & \begin{tabular}[c]{@{}c@{}}GPQA\\ (0-s)\end{tabular} & \begin{tabular}[c]{@{}c@{}}MUSR\\ (0-s)\end{tabular} & \begin{tabular}[c]{@{}c@{}}TruthfulQA\\ (0-s)\end{tabular} & \begin{tabular}[c]{@{}c@{}}BBH\\ (3-s)\end{tabular} & \begin{tabular}[c]{@{}c@{}}ARC\\ (25-s)\end{tabular} & Average \\ \hline
Mistral-7B-Instruct-v0.3  & 52.39 & \textbf{30.62} & \textbf{47.35} & 59.71 & 46.64 & 58.53 & 49.21 \\
+DPO (UltraFeedback)  & \textbf{53.22} & 28.94 & 47.35 & 64.74 & \textbf{47.46} & 60.32 & \textbf{50.34} \\
+DPO (Reward-Augmented)  & 51.86 & 28.02 & 46.56 & \textbf{65.90} & 46.36 & \textbf{61.60} & 50.05 \\
\hline
Qwen2-7B-Instruct  & 78.24 & 32.80 & 44.58 & 57.31 & \textbf{55.20} & 53.75 & 53.65 \\
+DPO (UltraFeedback)  & 78.17 & 32.80 & 44.31 & \textbf{58.91} & 54.49 & 53.75 & 53.74 \\
+DPO (Reward-Augmented)  & \textbf{81.05} & \textbf{32.97} & \textbf{45.77} & 57.99 & 54.94 & \textbf{54.52} & \textbf{54.54} \\
\hline
Llama-3.1-8B-Instruct  & 76.72 & \textbf{33.89} & 39.95 & 54.00 & 50.74 & 55.38 & 51.78 \\
+DPO (UltraFeedback)  & 78.47 & 33.72 & 43.39 & 56.61 & 51.31 & \textbf{57.51} & 53.50 \\
+DPO (Reward-Augmented)  & \textbf{78.77} & 32.55 & \textbf{43.52} & \textbf{63.32} & \textbf{51.57} & 56.48 & \textbf{54.37} \\
\hline
Gemma-2-9B-It  & 81.35 & \textbf{36.33} & 46.03 & 60.15 & 59.42 & 64.85 & 58.02 \\
+DPO (UltraFeedback)  & 83.32 & 34.14 & 46.56 & 65.12 & 59.78 & \textbf{66.41} & 59.22 \\
+DPO (Reward-Augmented)  & \textbf{83.62} & 35.74 & \textbf{48.15} & \textbf{65.27} & \textbf{59.82} & 65.87 & \textbf{59.75} \\
\hline
SPPO  & 79.83 & 35.91 & 44.97 & 62.56 & \textbf{59.61} & 63.74 & 57.77 \\
+DPO (UltraFeedback)  & \textbf{81.73} & 33.64 & 45.50 & 65.72 & 59.16 & \textbf{66.89} & 58.77 \\
+DPO (Reward-Augmented)  & 80.67 & \textbf{36.16} & \textbf{48.68} & \textbf{67.39} & 58.88 & 65.53 & \textbf{59.55}\\
\hline
\end{tabular}
\caption{Performance comparison between the LLMs after DPO on UltraFeedback, on reward-augmented UltraFeedback, and their base models on academic multi-choice QA benchmarks in standard zero-shot, few-shot, and CoT settings. Here, n-s refers to n-shot, the \textbf{bold} texts represent the best results in each family of models.\vspace{-0.15cm}}
\label{tab_academic}
\end{table*}

Across all instruction-following benchmarks, we observe that LLMs fine-tuned with DPO on the proposed reward-augmented data consistently outperform both their base models and those fine-tuned using DPO on the original UltraFeedback dataset by a considerable margin. Notably, direct alignment with the original preference data can sometimes degrade the performance of base models on specific benchmarks, such as Arena-Hard-Auto, which involves more complex reasoning tasks. In contrast, alignment using the reward-augmented data consistently yields superior results not only due to the improved style format gained from performing DPO on UltraFeedback.

Besides, we also evaluate the models on six academic multi-choice question-answering benchmarks, including GSM8K \citep{cobbe2021training}, GPQA \citep{rein2023gpqa}, MUSR \citep{sprague2023musr}, TruthfulQA \citep{lin2021truthfulqa}, BBH \citep{suzgun2022challenging}, and ARC Challenge \citep{clark2018think}. To better reflect the capabilities of LLMs, we adopt various settings for these benchmarks, including zero-shot, few-shot, and few-shot Chain-of-Thought (CoT). The results are shown in Table \ref{tab_academic}. 

It can be observed that performing DPO on the reward-augmented preference data leads to better average academic scores for most families of models compared to models fine-tuned on the original UltraFeedback dataset and the base models. Besides, we didn't observe severe alignment tax phenomenons \citep{askell2021general,noukhovitch2024language,li2024multi} after DPO, and our method is able to improve the base models on most of the benchmarks.

\subsection{Ablation Studies}
\vspace{-0.1cm}
\paragraph{Our Method Improves the Utility of Preference Data.} We provide two pieces of evidence that our method can get more juice out of the preference data compared to directly applying DPO. Firstly, we evaluate SPPO \citep{wu2024self} fine-tuned with DPO on UltraFeedback (UF). The results are shown in Table \ref{tab_sppo}. Since the SPPO model is already trained on UltraFeedback from Gemma-2-9B-It, an additional round of DPO training with the same data significantly degrades its performance. In contrast, performing DPO on Reward-Augmented (RA) UltraFeedback results in substantial performance gains for SPPO, indicating that our method enhances the utility of the preference data.

\begin{table}[H]
\centering
\begin{tabular}{l|l|l|l|l}
\hline
                        & LC WR & \hspace{0.1cm}WR    & MT   & \hspace{-0.1cm}Arena \\ \hline
SPPO                    & 55.60 & 49.61 & 8.40 & 47.6  \\
+DPO (UF)               & 52.75 & 40.58 & 8.41 & 40.4  \\
+DPO (RA)               & \textbf{60.97} & \textbf{66.41} & \textbf{8.73} & \textbf{49.0}  \\ \hline
\end{tabular}
\vspace{-0.2cm}
\caption{SPPO can be improved with DPO by performing reward augmentation on the same data.\vspace{-0.3cm}}
\label{tab_sppo}
\end{table}

The second evidence is that after DPO, the implicit reward can be used to relabel and augment the same preference data. Specifically, after training Qwen2-7B-Instruct with DPO on UltraFeedback, we leverage the resulting model $\pi_{\text{DPO}}$ to calculate the implicit reward for each prompt $x$ and response $y$, i.e., $\hat{r} = \beta(\log\pi_{\text{DPO}}(y\mid x) - \log\pi_{\text{Qwen}}(y\mid x))$. Then we perform DPO on Qwen2-7B-Instruct using the Implicit-Reward-Augmented (IRA) UltraFeedback. The results are shown in Table \ref{tab_ira}. We observe that augmenting the data with the implicit reward from the DPO (UF) model leads to superior performance even compared to augmenting the data with reward scores from the LLM judge, i.e., DPO (RA). This result highlights that DPO does not fully exploit the potential of the data. Moreover, this ablation demonstrates that our method is compatible with binarized preference datasets that only contain chosen and rejected response pairs, bypassing the need of judge models.
\begin{table}[H]
\centering
\begin{tabular}{l|l|l|l|l}
\hline
                        & LC WR & \hspace{0.1cm}WR    & MT   & \hspace{-0.1cm}Arena \\ \hline
Qwen2-7B-It          & 20.93 & 18.22 & 7.90 & 24.3  \\ 
+DPO (UF)               & 21.46 & 19.35 & 8.33 & 21.9  \\
+DPO (RA)               & 31.17 & 27.58 & 8.47 & \textbf{30.1}  \\ 
+DPO (IRA)               & \textbf{32.61} & \textbf{29.15} & \textbf{8.49} & 28.3  \\ \hline
\end{tabular}
\caption{A second round of DPO on the reward-augmented data, i.e., DPO (IRA), relabeled with the implicit reward from the DPO model at the first round, i.e., DPO (UF), significantly improves it. Our method helps get more juice out of the \textit{binarized} (i.e., without judge model rewards) preference data.\vspace{-0.15cm}}
\label{tab_ira}
\end{table}

\vspace{-0.2cm}
\paragraph{Reward-Augmented Data is Superior Not Just Due to Its Increased Size.} In this part, we show that the success of our method is not merely due to the increased size of the training dataset. To illustrate this, we perform DPO on the
\begin{table}[H]
\centering
\begin{tabular}{l|l|l|l|l}
\hline
                        & LC WR & \hspace{0.1cm}WR    & MT   & \hspace{-0.1cm}Arena \\ \hline
Qwen2-7B-It          & 20.93 & 18.22 & 7.90 & 24.3  \\ 
+DPO (UF)               & 21.46 & 19.35 & 8.33 & 21.9  \\ 
+DPO (RA)               & \textbf{31.17} & 27.58 & \textbf{8.47} & \textbf{30.1}  \\ 
+DPO (Half RA)     & 29.56 & \textbf{28.30} & 8.33 & 26.9 \\ \hline
Gemma-2-9B-It         & 49.20 & 37.58 & 8.54 & 42.8  \\ 
+DPO (UF)               & 50.70 & 35.02 & 8.54 & 35.8  \\ 
+DPO (RA)               & \textbf{59.27} & \textbf{54.56} & 8.59 & \textbf{43.9}  \\ 
+DPO (Half RA)     & 53.12 & 43.74 & \textbf{8.66} & 41.3 \\ \hline
\end{tabular}
\caption{DPO trained on only half of the data with reward augmentation outperforms the baseline.\vspace{-0.2cm}}
\label{tab_half}
\end{table} 
dataset where reward augmentation is applied to the first half of the UltraFeedback data, which we denote as DPO (Half RA). By doing so, the reward-augmented data is of the same size as the original dataset, but with only half of the prompts and the corresponding responses being utilized. It can be observed from Table \ref{tab_half} that DPO (Half RA) outperforms fine-tuning on the whole UltraFeedback (UF) by a large margin and achieves comparable performance to applying reward augmentation across the entire UF dataset, which is denoted as DPO (RA).

\paragraph{Reward-Augmented Data Mitigates the Unlearning Issue.} We first demonstrate that DPO suffers from the limitation of unnecessarily unlearning high-quality rejected responses, as discussed in Section \ref{sec_lim}. Specifically, on the test set of UltraFeedback, we calculate the log probability of each rejected response for the Qwen2-7B-Instruct model, its DPO (UF) model, and our method DPO (RA). In Fig. \ref{fig_unlearn}, we plot the expected log probability for rejected responses
\begin{figure}[hb]
    \centering
    \includegraphics[width=0.35\textwidth]{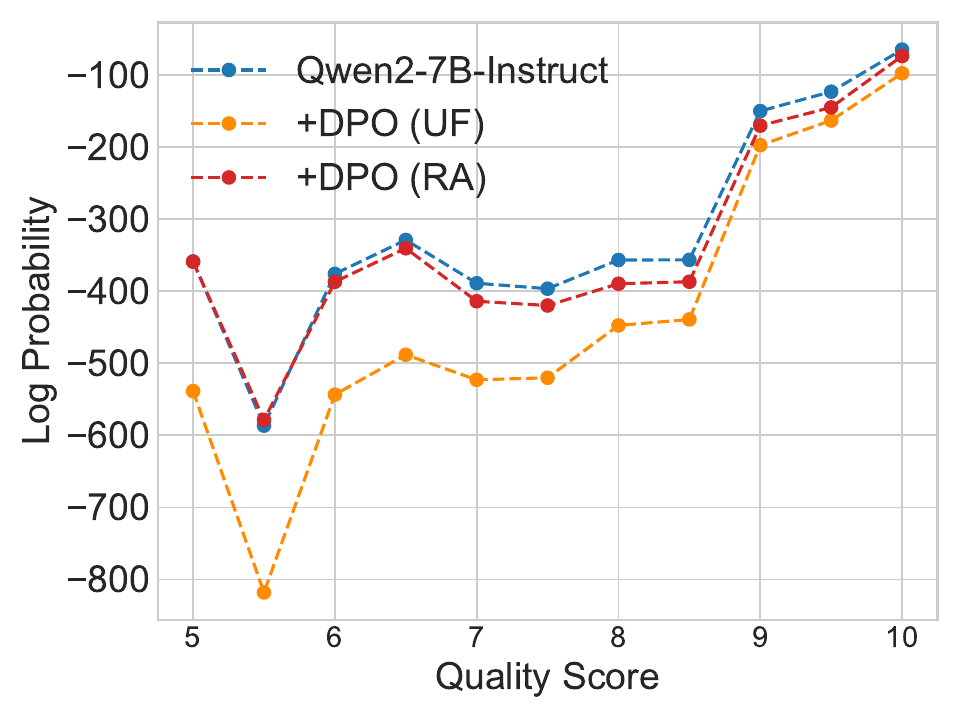}
    \vspace{-0.25cm}
    \caption{Our method helps mitigate the unlearning issue of DPO.}
    \label{fig_unlearn}
\end{figure}

whose reward scores $\geq 5$. We find that DPO substantially decreases the log probability of these high-quality rejected responses, confirming that the unlearning issue arises in practice. In contrast, our method alleviates this issue, although the probability is still slightly lower than the base model, which is proven to be DPO's feature \citep{rafailov2024r,zhang2024self,xu2024dpo}.

\vspace{-0.1cm}
\paragraph{Conditioning on Multi-Attribute Rewards Enables SOTA Models.} In previous parts, our method is implemented by conditioning on the scalar reward values given by the judge models, either LLMs or reward models. We find that our approach is generalizable to settings of multi-dimensional rewards that correspond to different attributes, such as helpfulness and truthfulness. Specifically, we follow the setting from last part to construct the preference dataset by applying the ArmoRM reward model on the on-policy responses generated by Llama-3-8B-Instruct. Since ArmoRM is a multi-objective model that not only gives a scalar reward value but also predicts human-interpretable fine-grained attributes, we first select $5$ attributes (namely complexity, instruction following, honesty, helpfulness, and intelligence depth) that have the highest average coefficients on the UltraFeedback data. Then we relabel the data by conditioning on the $5$-dim reward and follow the implementation of using ArmoRM described in the last part. The resulting model achieves state-of-the-art within the Llama-3-8B-Instruct model family, surpassing the strong baselines including SimPO \citep{meng2024simpo} that is trained also on on-policy data ranked by ArmoRM, and OpenChat \citep{wang2023openchat} fine-tuned with Conditioned-RLFT from the same Llama-3-8B-Instruct model.

\begin{table}[H]
\centering
\begin{tabular}{l|ccc}
\hline
&\begin{tabular}{@{}c@{}}LC Win Rate\end{tabular} & \begin{tabular}{@{}c@{}}Win Rate \end{tabular} & \begin{tabular}{@{}c@{}}Avg. Len.\end{tabular} \\ \hline
Ours & \textbf{56.57} & \textbf{52.19}  & 1840 \\ \hline
SimPO & 53.70 & 47.50  & 1777 \\ \hline
OpenChat & 17.48 & 11.36  & 1362 \\ \hline
\end{tabular}
\vspace{-0.1cm}
\caption{Our method trained with DPO achieves SOTA when conditioning on $5$-dim rewards.\vspace{-0.3cm}}
\label{tab_armorm}
\end{table}

\vspace{-0.1cm}
\paragraph{Comparison with Conditional Fine-Tuning Baselines.} We further compared with additional conditional post-training baselines on the offline UltraFeedback dataset (i.e., without on-policy data), including DPA \citep{wang2024arithmetic}, SteerLM \citep{dong2023steerlm}, and (Info)NCA \citep{chen2024noise}. Since both baselines aim to optimize a user-controllable attribute-conditioned LLM that is optimal under diverse preference profiles with different coefficients of the reward's attributes, in Figure \ref{fig_baselines}, we plot the win rates of these methods under varying preference profiles, such as adjusting verbosity preferences as considered in \cite{wang2024arithmetic}. Fine-tuned from Zephyr-SFT, our method achieves the best AlpacaEval 2.0 win rate. We defer more ablations and baseline comparisons to Appendix \ref{app_more_abl}.

\begin{figure}[H]
    \centering
    \includegraphics[width=0.35\textwidth]{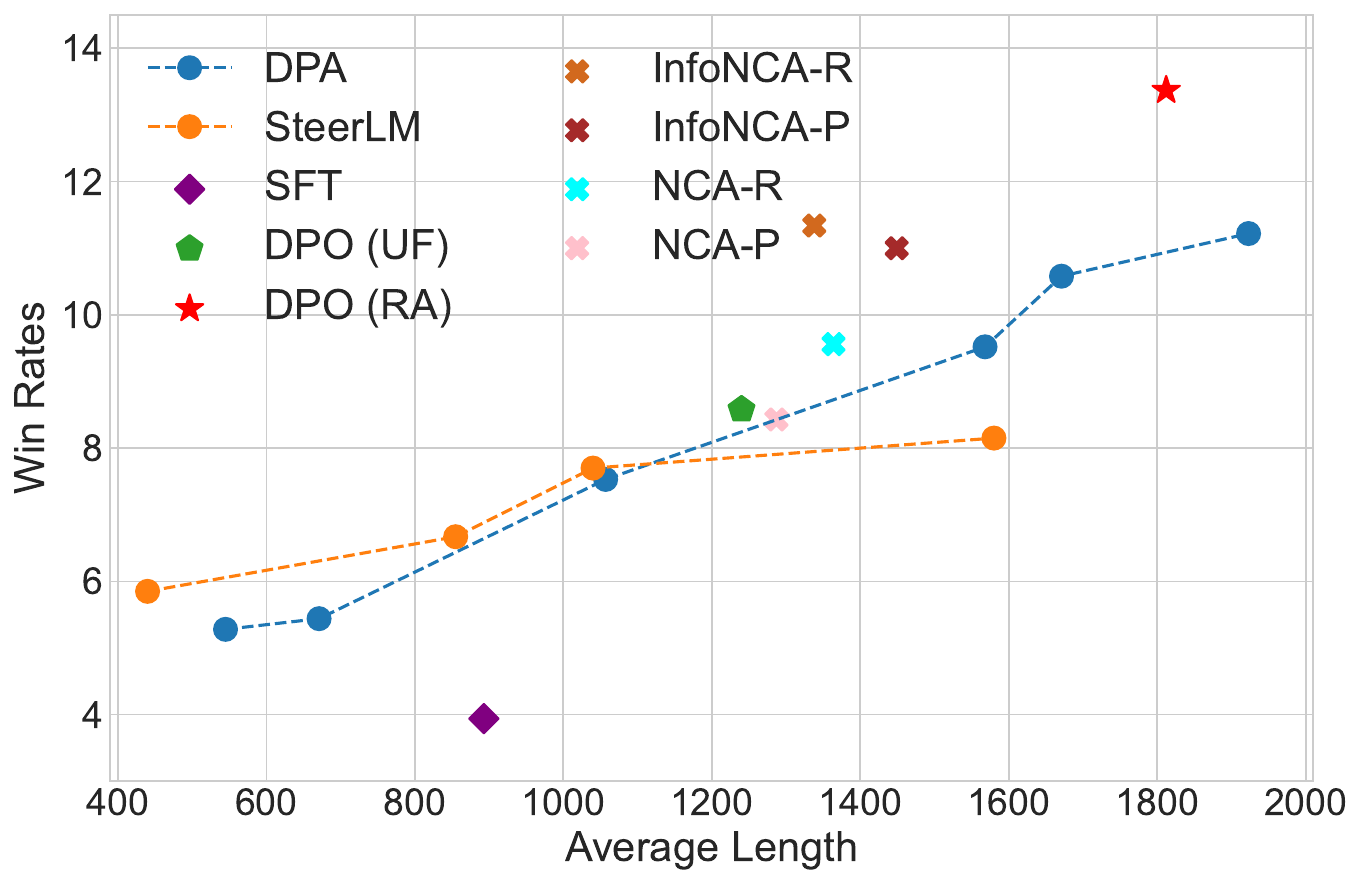}
    \vspace{-0.2cm}
        \caption{Comparisons with DPA, SteerLM, and (Info)NCA.\vspace{-0.3cm}}
    \label{fig_baselines}
\end{figure}

\vspace{-0.1cm}
\paragraph{Controllable Generation with Prompt.}
In Table \ref{tab_system}, we ablate how generations differ when changing the goal rewards in the system prompt. We observe that the AlpacaEval scores of the Qwen2-7B-It+DPO (RA) model change accordingly as $g$ varies. However, using the same $g=10$ prompt during inference for Qwen2-7B-It+DPO (UF) fails to give competitive results, indicating that our method is superior not only because of the additional system prompt.
\begin{table}[htbp]
\centering
\begin{tabular}{l|ccc|c}
\hline
&\begin{tabular}{@{}c@{}}$g=10$\end{tabular} & \begin{tabular}{@{}c@{}}$g=8$ \end{tabular}&\begin{tabular}{@{}c@{}}$g=6$\end{tabular} & \begin{tabular}{@{}c@{}}UF ($g=10$) \end{tabular} \\ \hline
LC WR & \textbf{31.17} & 28.66 & 25.56 & 24.44\\ 
WR & \textbf{27.58} & 25.57 & 18.88 & 20.75 \\ \hline
\end{tabular}
\caption{Performance when conditioned on different goal rewards in the inference prompt.\vspace{-0.2cm}}
\label{tab_system}
\end{table}

We refer the readers to Appendix \ref{app_more_abl} for more ablations.
\section{Conclusion}
In this paper, we first investigate the limitations of direct alignment algorithms, which arise from focusing solely on relative preferences while neglecting the qualities of the responses and their gaps. Specifically, since many rejected responses are only slightly worse than the chosen ones, striving to maximize the reparameterized reward gap will cause overfitting and unnecessarily unlearning the high-quality rejected response. Moreover, the directly aligned LLMs often struggle to differentiate between responses of varying quality, indiscriminately learning the low-quality chosen responses and failing to generalize effectively to more optimal responses that are sparse in the preference data. To resolve the above limitations, we introduce a straightforward solution---learning reward-conditioned policies. By optimizing the LLM to generate responses conditioned on their qualities, it can better differentiate between quality levels and learn from the entire spectrum. Motivated by this, we propose a data relabeling method that constructs reward-augmented datasets by conditioning on the quality of responses as the goal quality. In experiments, we fine-tune various LLMs by applying DPO on our reward-augmented data. The results demonstrate that our approach consistently delivers significant performance improvements across various instruction-following benchmarks and increases the average accuracy on academic benchmarks. 

\section*{Impact Statement}
This paper presents work whose goal is to advance the field of Machine Learning. There are many potential societal consequences of our work, none of which we feel must be specifically highlighted here.

\bibliography{reference}
\bibliographystyle{icml2025}

\appendix
\onecolumn
\section{Theory}
In this section, we present the theoretical analysis for our proposed method.
\subsection{Concepts}
We provide some useful concepts for the simplicity of later discussions. 
\begin{itemize}
    \item Hellinger distance $D_{\mathrm{Hellinger}}(p\|q)$ between two probability density functions $p$ and $q$ defined on $\mathcal
X$ is defined as 
\begin{align*} D_{\mathrm{Hellinger}}(p\|q) = \frac{1}{2}{\int_{x\in\mathcal{X}}\left(\sqrt{p(x)} - \sqrt{q(x)}\right)^2}{}\mathrm{d}x.
\end{align*}
   \item Total variation (TV) distance $D_{\mathrm{TV}}(p\|q)$ between two probability density functions $p$ and $q$ defined on $\mathcal
X$ is defined as 
\begin{align*} D_{\mathrm{TV}}(p\|q) = \frac{1}{2}{\int_{x\in\mathcal{X}}|p(x) - q(x)}|\mathrm{d}x.
\end{align*}
 \item Kullback–Leibler (KL) divergence $\text{KL}(p\|q)$ between two probability density functions $p$ and $q$ defined on $\mathcal
X$ is defined as 
\begin{align*} \text{KL}(p\|q) = \int_{x\in\mathcal{X}}\log\left(\frac{ p(x)}{q(x)}\right)p(x)\mathrm{d}x.
\end{align*}
\item We denote $\mathcal{N}_\epsilon(\mathcal{F},\|\cdot\|_\infty)$ as the $\epsilon$-covering number \citep{zhou2002covering} for function class $\mathcal{F}$ under the infinity norm $\|\cdot\|_\infty$. Widely used in theoretical analysis \citep{liu2024provably,zhan2023provable}, the $\epsilon$-covering number characterizes the complexity of the function class $\mathcal{F}$. 
\end{itemize}

\subsection{Theoretical Formulation}
\noindent
\textbf{Goal-conditioned preference model.} Consider a language model $\pi\in\Delta_\mathcal{Y}^\mathcal{X}$ that takes the prompt $x\in\mathcal{X}$ as input and outputs the response $y\in\mathcal{Y}$, where $\mathcal{X}$ and $\mathcal{Y}$ are spaces of prompts and responses, respectively.
Given the prompt $x\in\mathcal{X}$, a discrete probability distribution $\pi(\cdot\mid x)\in\Delta_\mathcal{Y}$ is generated, where $\Delta_\mathcal{Y}$ is the set of discrete distributions over $\mathcal{Y}$. 
We define the goal-conditioned reward function class as $\mathcal{R} \subset\{R(x,y,g): \mathcal{X}\times\mathcal{Y}\times\mathcal{G}\mapsto \mathbb{R}\}$, where $\mathcal{G}$ is the goal space. 
The goal-conditioned Bradley-Terry model \citep{bradley1952rank} for annotations is described as
\begin{align}
    \mathbb{P}_R(y_1 \succ y_0|x,y_1, y_0,g) = \frac{\exp(R(x,y_1,g))}{\exp(R(x,y_1,g))+ \exp(R(x,y_0, g))}= \sigma\big(R(x,y_1,g) -R(x,y_0,g)\big),\label{eq: bt model}
\end{align}
where $\sigma(z) = 1/(1+\exp(-z))$ is the sigmoid function. For notational simplicity, we also denote that the reward is parameterized by $\theta\in\Theta$. 
We denote the corresponding negative log-likelihood function for $r$ for a reward-augmented preference dataset $\bar{\mathcal{D}}=\{(x^i,y^i_w,y^i_l,g^i)\}_{i=1}^N$ as
\begin{align}
     \mathcal{L}(R,\bar{\mathcal{D}} )= -\EE_{(x, y_w, y_l,g)\sim\bar{\mathcal{D}}}\bigl[\log\sigma\bigl(R(x, y_w, g) -R(x, y_l, g)\bigr)\bigr],\label{eq:mle-loss}
\end{align}
 where $y_w^i$ is preferred to $y_l^i$ by the annotation given the prompt $x^i$ and goal $g^i$ for any $i\in[N]$.
For notational simplicity, we denote the DPO loss by 
\begin{align}
    \mathcal{L}_{\text{DPO}}(\pi,\bar{\mathcal{D}} )= -\EE_{(x, y_w, y_l,g)\sim\bar{\mathcal{D}}}\biggl[\log\sigma\biggl(\beta\log\frac{\pi(y_w\mid x, g)}{\pi_{\text{ref}}(y_w\mid x,g)} - \beta\log\frac{\pi(y_l\mid x,g)}{\pi_{\text{ref}}(y_l\mid x, g)}\biggr)\biggr].\label{eq:dpo_loss}
\end{align}

\noindent
\textbf{Performance metric.} For notational simplicity in the theoretical analysis, we denote by $R^\star$ the ground-truth goal-conditioned reward function.
The alignment target is to maximize the expected true reward $R^{\star}$ conditioned on the optimal goal $g^\star\in\mathcal{G}$.
Thus, we define the value function of any policy $\pi$ as 
\begin{align}
    J(\pi) = \mathbb{E}_{x\sim d_0,y\sim \pi(\cdot|x,g^\star)}\big[R^{\star}(x,y,g^\star)\big].\label{eq: value function}
\end{align}
Here we allow the prompt distribution $d_0(\cdot)$ to be different from that of the offline dataset distribution $\mu_{\bar{\cD}}(\cdot)$, but it is assumed to be known.
In the meanwhile, we consider the policies that share the same support as the reference policy $\pi_{\mathrm{ref}}$ \citep{xiong2023gibbs}, that is, we take a policy class $\Pi$ as
\begin{align}
    \Pi = \Big\{\pi:\cX\times\mathcal{G}\mapsto\Delta(\cA)\,\Big|\, \mathrm{Supp}(\pi(\cdot|x,g))\subseteq\mathrm{Supp}(\pi_{\mathrm{ref}}(\cdot|x,g)),\,\,\forall (x,g)\in\cX\times\mathcal{G}\Big\}.\label{eq: policy class}
\end{align}

The performance gap of a learned policy $\widehat{\pi}\in\Pi$ w.r.t. any given optimal policy $\pi^\star$ is measured as 
\begin{align}
    \mathrm{Gap}^{\pi^\star}(\widehat{\pi}) = J(\pi^*) - J(\widehat{\pi}),\,\,\emph{\text{given any optimal policy $\pi^\star\in\Pi$,}}\label{eq: gap}
\end{align}
One popular choice to define the optimal policy is to maximize the KL-regularized reward, i.e., 
\begin{align}
    \pi^\star = \operatornamewithlimits{argmax}_{\pi\in\Pi}\Bigl[R^\star(x,y,g^\star)-\beta_0\text{KL}\bigl(\pi(\cdot\mid x,g^\star)\| \pi_{\text{ref}}(\cdot\mid x,g^\star)\bigr)\Bigr]
\end{align}
for a fixed $\beta_0>0$.

\noindent
\textbf{Theoretical version of the reward-augmented DPO.} We formulate the theoretical version of the reward-augmented DPO in Algorithm \ref{alg:theory}, where we add an SFT regularizer on the empirical objective to handle the issue of distribution shift and analyze the bound on the suboptimality \citep{liu2024provably,cen2024value}. One simple choice  to define SFT policy $\pi_{\text{SFT}}$ is to utilize the chosen labels in the original preference dataset $\mathcal{D}$, that is,
    \begin{align}
        \pi_{\text{sft}} = \operatornamewithlimits{argmax}_{\pi\in\Pi} \mathbb{E}_{(x,y_w)\sim\mathcal{D}}[\log \pi(y_w\mid x, g^\star)].
    \end{align}
In practice, the goal relabeling distribution $g\sim p_\mathcal{G}(\cdot\mid x,y)$ is set to be a deterministic selection of the annotated reward of the chosen response, i.e., $g= r_{\text{RM}}(x, y)$ for any $i\in[N]$ and a given reward model $r_{\text{RM}}$. We also remark that the size of the reward-augmented preference dataset $\bar{\mathcal{D}}$ is $N=2N_0$, where $N_0$ denotes the size of the original preference dataset $\mathcal{D}$.

\subsection{Assumptions for Theoretical Analysis}
Similar to the theoretical analyses on offline RLHF \citep{liu2024provably, cen2024value}, we provide the following assumptions. 

\begin{algorithm}[t]
	\caption{Theoretical Version of the Reward-Augmented DPO}
	
	\begin{algorithmic}[1] \label{alg:1} 
	\STATE \textbf{Input}: Preference dataset $\mathcal{D}=\{(x^i,y^i_w,y^i_l)\}_{i=1}^{N_0}$, parameters $\beta,\eta>0$, reference policy $\pi_{\mathrm{ref}}$, SFT policy $\pi_{\text{sft}}$ for the regularizer, and goal labeling distribution $p_\mathcal{G}$.
    \STATE Initialize the reward-augmented preference dataset $\bar{\mathcal{D}}=\emptyset$.
    \FOR{$i=1,\ldots, N_0$}
 \STATE Sample goal $g^i_w$ from $p_\mathcal{G}(\cdot\mid x^i, y^i_w)$ and update  the reward-augmented preference dataset as $\bar{\mathcal{D}}\leftarrow\bar{\mathcal{D}}\cup\{(x^i,y_w^i,y_l^i,g^i_w)\}$.
\STATE Sample goal $g^i_l$ from $p_\mathcal{G}(\cdot\mid x^i, y^i_l)$ and update  the reward-augmented preference dataset as $\bar{\mathcal{D}}\leftarrow\bar{\mathcal{D}}\cup\{(x^i,y_l^i,y_w^i,g^i_l)\}$.
     \ENDFOR
    \STATE   Solve policy $\pi_{\hat\theta}$ by optimizing the following objective
    \begin{align}
\min_{\theta\in\Theta}\biggl\{{\mathbb{E}_{\substack{x\sim d_0, y_0\sim \pi_{\text{sft}}(\cdot\mid x,g^\star) }}\Big[ -\eta\beta\cdot\log(\pi_\theta(y_0|x,g^\star))\Big]} +    \mathcal{L}_{\text{DPO}}(\pi_\theta,\bar{\mathcal{D}})\bigg\}
    \end{align}
    \STATE\textbf{Output}: Policy $\hat{\pi} = \pi_{\hat\theta}$.
    \label{alg:theory}
	\end{algorithmic}
\end{algorithm}

\begin{assumption}[True reward model]\label{as: regularity}
    We assume that the true goal-conditioned reward model $R^{\star}\in\cR$ for, and for any $R\in\cR$ and $(x,y,g)\in\cX\times\cA\times\mathcal{G}$, it holds that $R(x,y,g)\in[-B/2,B/2]$ for a positive constant $B>0$.
\end{assumption}
Assumption~\ref{as: regularity} is standard in sample complexity analysis \citep{zhu2023principled, zhan2023provable, ye2024theoretical} in RLHF.
\begin{assumption}
   [Regularity]\label{as: regularity2} 
    We assume that the reward model class $\cR$, prompt space $\mathcal{X}$, and goal space $\mathcal{G}$ are convex and compact.  
\label{as:exchange}
\end{assumption}
Assumption~\ref{as: regularity2} plays a role in establishing the equivalence between maximin and minimax optimizations. This assumption is naturally satisfied when considering a linear reward function \citep{zhu2023principled, xiong2023gibbs, cen2024value} of the form \( R_\theta(x, y, g) = \varphi(x, y, g)^\top \theta \), where \(\varphi\) represents a known feature map. More broadly, the assumption is also met by the class of Lipschitz continuous reward models.
\begin{assumption}[Partial coverage coefficient]\label{as: coverage}
    Given the optimal policy $\pi^\star\in\Pi$, the coverage coefficient of the population  distribution $\mu_{\bar{\cD}}$ of the reward-augmented preference dataset $\bar{\cD}$  w.r.t. reward model class $\cR$, optimal policy $\pi^\star$, and the SFT policy $\pi_{\mathrm{sft}}$, denoted by $C_{\mu_{\bar{\cD}}}(\cR; \pi^\star, \pi_{\mathrm{sft}})$, is defined as 
    \begin{align}
\sup_{R\in\cR}\frac{\mathbb{E}_{x\sim d_0,y_1\sim \pi^\star(\cdot|x, g^\star),y_0\sim \pi_\text{sft}(\cdot|x,{g^\star})}\big[(R^{\star}(x,y_1,g^\star) -R^{\star}(x,y_0,g^\star) - (R(x,y_1,g^\star) -R(x,y_0,g^\star))\big]}{\sqrt{\mathbb{E}_{(x,y_w,y_l,g)\sim \mu_{\bar{\cD}}}\left[\big|(R(x,y_w,g) -R(x,y_l,g)) - (R(x,y_w,g) -R(x,y_l,g))\big|^2\right]}}.
    \end{align}
    We assume that $C_{\mu_{\bar{\cD}}}(\cR; \pi^\star,\pi_{\text{sft}})<+\infty$ for the given optimal policy $\pi^\star\in\Pi$. 
    
\end{assumption}

Assumption~\ref{as: coverage} characterizes how well the dataset $\bar{\cD}$ covers the optimal policy $\pi^\star$ and the SFT policy $\pi_{\text{sft}}$ given the optimal goal $g^\star$, instead of covering all the policies in the policy class. That is the reason why we call this assumption ``partial coverage''. Different variants of partial coverage assumptions are posed in previous literature \citep{liu2024provably,cen2024value,zhan2023provable,xie2021bellman} that study offline RLHF and RL to characterize the distribution shift between the optimal policy and the offline dataset distribution. 
We remark that the quantity $C_{\mu_{\bar{\cD}}}(\cR; \pi^\star,\pi_{\text{sft}})$ is upper bounded by the density ratio $\|d_0(\cdot)\otimes \pi^\star(\cdot| \cdot,g^\star)\otimes \pi_{\mathrm{sft}}(\cdot| \cdot,g^\star)/\mu_{\bar{\cD}}(\cdot,\cdot,\cdot,g^\star)\|_{\infty}$.

\subsection{Theoretical Results}
\label{appendix_thm}
Under assumptions introduced before, we are ready to give the theoretical result for Algorithm \ref{alg:1} in the following theorem.
\begin{theorem}[Suboptimality of Algorithm~\ref{alg:theory}]\label{thm:main}
    Taking the policy class $\Pi$ as \eqref{eq: policy class}, supposing that Assumptions \ref{as: regularity}, \ref{as:exchange}, and \ref{as: coverage} hold, and assuming that the reward model class $\cR$ has a finite $\varepsilon$-covering number under $\|\cdot\|_{\infty}$-norm $\cN_{\varepsilon}(\cR,\|\cdot\|_{\infty})<+\infty$ with $\varepsilon = (6\cdot(1+e^B)\cdot N)^{-1}$.
    Setting 
    \begin{align*}
       \eta =  (1+\exp(B))^{-2}\cdot\sqrt{24\log\left(\cN_{\varepsilon}(\cR,\|\cdot\|_{\infty})/\delta\right)/N},\quad \beta = 1/\sqrt{N}
    \end{align*} 
    in Algorithm~\ref{alg:1}. 
    Then the output policy $\widehat \pi$ of Algorithm~\ref{alg:1} satisfies that with probability at least $1-\delta$,
    \begin{align}
        &\mathrm{Gap}^{\pi^\star}(\widehat{\pi})\!\leq\! \sqrt{\frac{1}{N}}\cdot\bigg\{\frac{\sqrt{6}}{4} \big(1+\exp(B)\big)^2 \big(\big(C_{\mu_{\bar{\cD}}}(\cR; \pi^\star, \pi_\text{sft})\big)^2+1\big) \iota \notag\\&\qquad\qquad\qquad+  \mathbb{E}_{x\sim d_0}\Big[\mathrm{KL}\big(\pi^\star(\cdot|x, g^\star)\|\pi_{\mathrm{ref}}(\cdot|x, g^\star)\big) \Big]\bigg\},
    \end{align}
    where $\iota =  \sqrt{\log\left(\cN_{\varepsilon}(\cR,\|\cdot\|_{\infty})/\delta\right)}$ with $\varepsilon = (6\cdot(1+e^B)\cdot N)^{-1}$.
    Here, $N$ denotes the number of preference pairs in $\cD$, $R$ denotes the upper bound of the reward models, and the partial coverage coefficient $C_{\mu_{\bar{\cD}}}(\cR; \pi^\star,\pi_\text{sft})$ is defined in Assumption~\ref{as: coverage}.
\end{theorem}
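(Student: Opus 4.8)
The plan is to follow the DPO-reparameterization template for offline RLHF of \citep{liu2024provably,cen2024value}, adapted to goal conditioning. First I would use the DPO change of variables to identify each $\pi_\theta\in\Pi$ with its implicit reward $R_\theta(x,y,g)=\beta\log\big(\pi_\theta(y\mid x,g)/\pi_{\mathrm{ref}}(y\mid x,g)\big)$, under which $\mathcal{L}_{\mathrm{DPO}}(\pi_\theta,\bar{\mathcal{D}})$ in \eqref{eq:dpo_loss} is exactly the Bradley–Terry MLE loss $\mathcal{L}(R_\theta,\bar{\mathcal{D}})$ of \eqref{eq:mle-loss}, since the log-partition cancels in the reward difference. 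The key identity is that, up to an additive constant, the SFT regularizer equals $\eta\max_{\pi\in\Pi}\big(\mathbb{E}_{x\sim d_0,\,y\sim\pi(\cdot\mid x,g^\star)}[R_\theta]-\mathbb{E}_{x\sim d_0,\,y_0\sim\pi_{\mathrm{sft}}(\cdot\mid x,g^\star)}[R_\theta]-\beta\,\mathbb{E}_{x\sim d_0}[\mathrm{KL}(\pi(\cdot\mid x,g^\star)\|\pi_{\mathrm{ref}}(\cdot\mid x,g^\star))]\big)$, because $\beta\log\sum_y\pi_{\mathrm{ref}}(y\mid x,g^\star)\exp(R_\theta(x,y,g^\star)/\beta)$ is the KL-regularized optimal value. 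Thus the optimization step of Algorithm~\ref{alg:theory} is a convex--concave minimax program $\min_{R\in\mathcal{R}}\max_{\pi\in\Pi}F(\pi,R)$; under Assumption~\ref{as:exchange} (convex, compact $\mathcal{R},\mathcal{X},\mathcal{G}$) I would apply Sion's minimax theorem to swap $\min$ and $\max$ and to obtain a saddle point $(\widehat R,\widehat\pi)$, with the saddle-point maximizer $\widehat\pi$ equal to $\pi_{\widehat\theta}$, the KL-regularized optimal policy of $\widehat R$.

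Next I would decompose the suboptimality through the saddle-point inequality $F(\pi^\star,\widehat R)\le F(\widehat\pi,\widehat R)\le F(\widehat\pi,R^\star)$, which is valid since $R^\star\in\mathcal{R}$ (Assumption~\ref{as: regularity}). Expanding this, dividing by $\eta$, and adding $\mathbb{E}_{x\sim d_0}\big[\mathbb{E}_{y\sim\pi^\star}[(R^\star-\widehat R)(x,y,g^\star)]\big]$ to both sides turns the left-hand side into $\mathrm{Gap}^{\pi^\star}(\widehat\pi)$ and gives
\[
\mathrm{Gap}^{\pi^\star}(\widehat\pi)\le\underbrace{\mathbb{E}_{x\sim d_0}\!\big[\mathbb{E}_{\pi^\star}[(R^\star-\widehat R)(x,\cdot,g^\star)]-\mathbb{E}_{\pi_{\mathrm{sft}}}[(R^\star-\widehat R)(x,\cdot,g^\star)]\big]}_{(\mathrm{I})}+\tfrac{1}{\eta}\big(\mathcal{L}(R^\star,\bar{\mathcal{D}})-\mathcal{L}(\widehat R,\bar{\mathcal{D}})\big)+\beta\,\mathbb{E}_{x\sim d_0}\!\big[\mathrm{KL}(\pi^\star\|\pi_{\mathrm{ref}})\big].
\]
The role of the SFT regularizer is visible here: it replaces the uncontrollable out-of-distribution term $\mathbb{E}_{\widehat\pi}[(R^\star-\widehat R)]$ by $\mathbb{E}_{\pi_{\mathrm{sft}}}[(R^\star-\widehat R)]$, so that $(\mathrm{I})$ is precisely the numerator in the partial coverage coefficient of Assumption~\ref{as: coverage}, and an all-policy concentrability is never needed.

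For the remaining two terms I would invoke the standard maximum-likelihood generalization bound for the Bradley–Terry model \eqref{eq: bt model}: with probability at least $1-\delta$, for all $R\in\mathcal{R}$, $\mathcal{L}(R,\bar{\mathcal{D}})-\mathcal{L}(R^\star,\bar{\mathcal{D}})\ge c_1\,\mathbb{E}_{\mu_{\bar{\mathcal{D}}}}\big[D_{\mathrm{Hellinger}}^2(\mathbb{P}_R\|\mathbb{P}_{R^\star})\big]-c_2\iota^2/N$, where $\iota^2=\log\big(\mathcal{N}_\varepsilon(\mathcal{R},\|\cdot\|_\infty)/\delta\big)$ enters through a union bound over an $\varepsilon$-net of $\mathcal{R}$. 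By Assumption~\ref{as: regularity} the Bradley–Terry logits lie in $[-B,B]$, so the Bradley–Terry Hellinger distance lower-bounds the $\mu_{\bar{\mathcal{D}}}$-weighted squared error of reward \emph{differences}: $\mathbb{E}_{\mu_{\bar{\mathcal{D}}}}\big[|(R-R^\star)(x,y_w,g)-(R-R^\star)(x,y_l,g)|^2\big]\le c_3(1+e^B)^2\,\mathbb{E}_{\mu_{\bar{\mathcal{D}}}}[D_{\mathrm{Hellinger}}^2]$. Writing $v^2=\mathbb{E}_{\mu_{\bar{\mathcal{D}}}}[D_{\mathrm{Hellinger}}^2(\mathbb{P}_{\widehat R}\|\mathbb{P}_{R^\star})]$, Assumption~\ref{as: coverage} gives $(\mathrm{I})\le C_{\mu_{\bar{\mathcal{D}}}}(\mathcal{R};\pi^\star,\pi_{\mathrm{sft}})\sqrt{c_3}\,(1+e^B)\,v$, while the middle term is at most $-\tfrac{c_1}{\eta}v^2+\tfrac{c_2\iota^2}{\eta N}$; maximizing the quadratic $v\mapsto C_{\mu_{\bar{\mathcal{D}}}}\sqrt{c_3}(1+e^B)v-\tfrac{c_1}{\eta}v^2$ over $v$ then yields $\mathrm{Gap}^{\pi^\star}(\widehat\pi)\le\tfrac{c_3}{4c_1}C_{\mu_{\bar{\mathcal{D}}}}^2(1+e^B)^2\eta+\tfrac{c_2\iota^2}{\eta N}+\beta\,\mathbb{E}_{x\sim d_0}[\mathrm{KL}(\pi^\star\|\pi_{\mathrm{ref}})]$. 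Substituting $\eta=(1+e^B)^{-2}\sqrt{24\iota^2/N}$, $\beta=1/\sqrt N$, and bounding $C_{\mu_{\bar{\mathcal{D}}}}^2\le(1+e^B)^2C_{\mu_{\bar{\mathcal{D}}}}^2$ so the two statistical terms merge into $\tfrac{\sqrt6}{4}(1+e^B)^2(C_{\mu_{\bar{\mathcal{D}}}}^2+1)\iota/\sqrt N$, produces the stated bound.

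I expect the main obstacle to be the minimax/saddle-point argument and its consequence for the decomposition: one must verify that Sion's theorem applies under Assumptions~\ref{as: regularity}--\ref{as:exchange}, that the DPO-reparameterized output $\pi_{\widehat\theta}$ really is the saddle-point maximizer, and --- most importantly --- that the saddle-point inequality produces $\mathbb{E}_{\pi_{\mathrm{sft}}}$ rather than $\mathbb{E}_{\widehat\pi}$ in $(\mathrm{I})$, which is exactly what converts the distribution-shift difficulty into the partial coverage coefficient of Assumption~\ref{as: coverage}. The rest --- the Bradley–Terry MLE concentration, the Hellinger-to-$\ell_2$ conversion tracking the $(1+e^B)^2$ factor, and the AM–GM/parameter tuning fixing the constant $\sqrt6/4$ --- is routine bookkeeping closely following \citep{liu2024provably}.
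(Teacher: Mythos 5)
Your proposal is correct and follows essentially the same route as the paper's proof: the DPO reparameterization into a reward-space convex--concave minimax problem, the min/max exchange under Assumption~\ref{as: regularity2} (the paper routes this through its Theorem~\ref{thm:minimax} rather than citing Sion directly), the saddle-point decomposition of the gap into a coverage-controlled reward-difference term plus an MLE loss difference plus a $\beta\cdot\mathrm{KL}$ term, the uniform Bradley--Terry concentration over an $\varepsilon$-net, the Hellinger$\to$TV$\to$sigmoid$\to$reward-difference chain, Assumption~\ref{as: coverage}, and the $az-bz^2\leq a^2/(4b)$ tuning. The only quibble is a harmless constant-tracking slip: since $\kappa=(1+e^B)^{-2}$, the squared reward difference is bounded by $(1+e^B)^{4}$ times the squared Hellinger distance (not $(1+e^B)^{2}$ as written), which is exactly what makes the final bound carry $(1+e^B)^{2}$ after substituting $\eta$.
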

The detailed proof is provided in Appendix \ref{app:proof}. Theorem \ref{thm:main} shows that our proposed reward-augmented DPO (Algorithm \ref{alg:1}) can attain global convergence to the optimal policy and the suboptimality decays at the order of $N^{-1/2}$ ($N$ denotes the size of the reward-augmented preference dataset). Theorem \ref{thm:main} provides a theoretical justification for the strong empirical performance of the reward-augmented DPO introduced in this paper. Unlike prior works on goal-conditioned reinforcement learning with supervised learning \citep{yang2022rethinking,ghosh2019learning}, which typically establish weaker results such as local performance improvements or the optimization of a lower bound on $J(\pi)$, our analysis guarantees global convergence to the optimal policy. This distinction underscores the significance of integrating DPO-like methods with goal-conditioned approaches.

\subsection{Proof of Theorem \ref{thm:main}}\label{app:proof}

\paragraph{Bridge Algorithm \ref{alg:1} to the maximin optimization.} Motivated by \citet{liu2024provably}, we transform the optimization objective in Algorithm \ref{alg:1} to a minimax optimization objective, and then to a maximum optimization objective, where the maximum optimization objective can be analyzed with tools in RL analysis.

Define the objective function $\phi(\pi, R)$  as
\begin{align}
    \phi(\pi, R)&= \eta\cdot\mathbb{E}_{\substack{x\sim d_0, y_1\sim\pi(\cdot|x,g^\star)\notag\\ y_0\sim \pi_\text{sft}(\cdot|x,g^\star)}}\Big[R(x,y_1,g^\star) -R(x,y_0,g^\star) \notag\\ &\qquad\qquad-\beta\cdot D_{\mathrm{KL}}\big(\pi(\cdot|x,g^\star)\|\pi_{\mathrm{ref}}(\cdot|x,g^\star)\big)\Big] +  {\cL}(R,\bar{\mathcal{D}}).\label{eq:phi}
\end{align}
First, we prove that the derived policy $\hat \pi$ from Algorithm \ref{alg:1} satisfies
\begin{align}   \hat \pi\in\argmax_{\pi\in\Pi}\phi(\widehat{R},\pi),\quad \text{where}\quad  \widehat{R} \in \argmin_{R\in \cR}\,\max_{\pi\in\Pi}\phi(\pi, R).\label{eq: equivalent formal 1}
 \end{align}
By the definition of the optimization objective $\phi(\pi,R)$ in \eqref{eq:phi}, we have
\begin{align}  \min_{R\in \cR}\,\max_{\pi\in\Pi}\phi(\pi, R) &= \min_{R\in \cR}\Bigg\{ \eta\cdot\max_{\pi\in\Pi}\bigg\{\mathbb{E}_{x\sim d_0, y_1\sim\pi(\cdot|x,g^\star)}\Big[R(x,y_1,g^\star) -\beta\cdot \mathrm{KL}\big(\pi(\cdot|x, g^\star)\|\pi_{\mathrm{ref}}(\cdot|x,g^\star)\big)\Big]\bigg\} \notag\\
&\qquad\qquad - \eta\cdot\mathbb{E}_{x\sim d_0, y_0\sim\pi_{\text{sft}}(\cdot|x,g^\star)}\Big[R(x,y_0,g^\star)\Big] +  {\cL}(R,\bar{\mathcal{D}})\Bigg\}.\label{eq: equivalent formal 1.1}    \end{align}
Then, we apply the following lemma to solve the inner maximization problem in \eqref{eq: equivalent formal 1.1}.
\begin{lemma}[Oracle optimal KL-regularized policy]\label{lem: optimal policy}
    Given any reward model $R\in\mathcal{R}$, the optimal policy $\pi_R$ to the maximization problem \begin{align}
            \max_{\pi\in\Pi}\bigg\{\mathbb{E}_{x\sim d_0, y\sim\pi(\cdot|x,g^\star)}\Big[R(x,y,g^\star) -\beta\cdot \mathrm{KL}\big(\pi(\cdot|x, g^\star)\|\pi_{\mathrm{ref}}(\cdot|x,g^\star)\big)\Big]\bigg\}.\label{eq: kl-regularized reward}
    \end{align} is given by 
\begin{align}
    \pi_R(\cdot|x, g)&= \frac{1}{Z_R(x, g)}\cdot \pi^{\mathrm{ref}}(\cdot|x,g)\cdot \exp\left(\beta^{-1}R(x,\cdot,g)\right),\\\notag
    Z_R(x, g) &= \int_{y\in\cY} \exp\left(\beta^{-1}R(x,y, g)\right)\mathrm{d}\pi_{\mathrm{ref}}(y|x, g),
\end{align}
and correspondingly the optimal value of \eqref{eq: kl-regularized reward} is given by $\eqref{eq: kl-regularized reward} =  \mathbb{E}_{x\sim d_0}[\beta\cdot \log(Z_R(x, g^\star))]$.
\end{lemma}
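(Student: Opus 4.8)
The plan is to reduce the functional optimization over the policy class $\Pi$ to a pointwise (per-prompt) optimization over probability distributions on $\cY$, and then to recognize the resulting objective as a Gibbs variational (``free energy'') problem whose maximizer is the Boltzmann distribution $\pi_R$. Since the objective in the lemma is an expectation over $x\sim d_0$ of a term that, for each fixed $x$, depends only on the conditional law $\pi(\cdot\mid x,g^\star)$, and since there is no coupling across distinct prompts, I would first argue that it suffices to maximize, separately for each $x$, the inner quantity $F_x(p) := \mathbb{E}_{y\sim p}[R(x,y,g^\star)] - \beta\,\mathrm{KL}(p\,\|\,\pi_{\mathrm{ref}}(\cdot\mid x,g^\star))$ over $p\in\Delta_{\mathcal{Y}}$ with $\mathrm{Supp}(p)\subseteq\mathrm{Supp}(\pi_{\mathrm{ref}}(\cdot\mid x,g^\star))$, the latter restriction coming exactly from the definition of $\Pi$ in \eqref{eq: policy class} and being precisely the condition under which the KL term is finite.

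The key step is the variational identity. For fixed $x$ I would divide $F_x(p)$ by $\beta>0$ and complete the square by introducing the normalizer $Z_R(x,g^\star)=\int_{y\in\cY}\exp(\beta^{-1}R(x,y,g^\star))\,\mathrm{d}\pi_{\mathrm{ref}}(y\mid x,g^\star)$. Writing $\exp(\beta^{-1}R)=Z_R\,\pi_R/\pi_{\mathrm{ref}}$ for the candidate Gibbs policy $\pi_R$ of the lemma, a direct rearrangement gives $\beta^{-1}F_x(p)=\log Z_R(x,g^\star)-\mathrm{KL}(p\,\|\,\pi_R(\cdot\mid x,g^\star))$, that is, $F_x(p)=\beta\log Z_R(x,g^\star)-\beta\,\mathrm{KL}(p\,\|\,\pi_R(\cdot\mid x,g^\star))$. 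The only term depending on $p$ is the last KL, which is nonnegative and vanishes if and only if $p=\pi_R(\cdot\mid x,g^\star)$; hence the pointwise maximizer is $p^\star=\pi_R(\cdot\mid x,g^\star)$ with optimal value $\beta\log Z_R(x,g^\star)$. Integrating over $x\sim d_0$ yields the claimed optimal value $\mathbb{E}_{x\sim d_0}[\beta\log Z_R(x,g^\star)]$, and assembling the pointwise maximizers gives the global maximizer $\pi_R$.

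To close the argument cleanly I would verify three routine points. First, feasibility: the candidate $\pi_R$ lies in $\Pi$, since $\exp(\beta^{-1}R)>0$ forces $\mathrm{Supp}(\pi_R(\cdot\mid x,g^\star))=\mathrm{Supp}(\pi_{\mathrm{ref}}(\cdot\mid x,g^\star))$, matching the support constraint of \eqref{eq: policy class}. Second, well-posedness: by Assumption~\ref{as: regularity} the reward is bounded, $R\in[-B/2,B/2]$, so $\exp(\beta^{-1}R)$ is bounded above and bounded away from zero and $Z_R(x,g^\star)\in[\,e^{-B/(2\beta)},e^{B/(2\beta)}\,]$ is finite, making $\pi_R$ a well-defined probability measure and $\log Z_R$ integrable against $d_0$. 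Third, the completing-the-square identity only manipulates measures absolutely continuous with respect to $\pi_{\mathrm{ref}}$, which is exactly the regime enforced by $\Pi$; any $p$ charging mass outside $\mathrm{Supp}(\pi_{\mathrm{ref}}(\cdot\mid x,g^\star))$ makes the KL penalty $+\infty$ and $F_x(p)=-\infty$, so no such policy can be optimal. I do not expect a genuine obstacle here: the content is the classical Gibbs/softmax characterization, and the only care needed is the bookkeeping of the support restriction inherited from $\Pi$ and the interchange of the $x$-expectation with the pointwise maximization, both justified by the absence of cross-prompt coupling.
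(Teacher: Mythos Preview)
Your argument is correct and is exactly the standard Gibbs variational characterization: decouple over $x$, rewrite $F_x(p)=\beta\log Z_R(x,g^\star)-\beta\,\mathrm{KL}(p\|\pi_R)$, and read off both the maximizer and the optimal value. The paper does not prove this lemma in-house but simply defers to Lemma~4.2 of \citet{liu2024provably}, which uses the same argument, so your approach matches.
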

\begin{proof}[Proof of Lemma \ref{lem: optimal policy}]
    See the proof in Lemma 4.2 of \citet{liu2024provably}.
\end{proof}
By Lemma \ref{lem: optimal policy} and \eqref{eq: equivalent formal 1.1}, we have
\begin{align}
     \min_{R\in \cR}\,\max_{\pi\in\Pi}\phi(\pi, R) &= \min_{R\in \cR}\left\{ \beta\eta\cdot\log (Z_R(x,g^\star))- \eta\cdot\mathbb{E}_{x\sim d_0, y_0\sim\pi_{\text{sft}}(\cdot|x,g^\star)}\Big[R(x,y_0,g^\star)\Big] +  {\cL}(R,\bar{\mathcal{D}})\right\}.\label{eq:minimax-3}
\end{align}
From Lemma~\ref{lem: optimal policy}, we know that given any reward model $R\in\mathcal{R}$, we can reparameterize it via its corresponding optimal goal-conditioned KL-regularized policy $\pi_R$ \citep{rafailov2024direct}, that is,
\begin{align}
   R(x,\cdot,g) = \beta\cdot\log\left(\frac{\pi_R(\cdot|x,g)}{\pi^{\mathrm{ref}}(\cdot|x,g)}\right) + \beta\cdot\log (Z_R(x,g)).\label{eq: reparametrize}
\end{align}
Plugging \eqref{eq: reparametrize} into \eqref{eq:minimax-3}, we show that the optimization problem in Algorithm \ref{alg:1} relates to the minimax optimization problem on $\phi(\pi, R)$:
\begin{align}
     \min_{R\in \cR}\,\max_{\pi\in\Pi}\phi(\pi, R) &= \min_{R\in \cR}\left\{ \eta\beta\cdot \mathbb{E}_{x\sim d_0, y_0\sim\pi_{\text{sft}}(\cdot|x,g^\star)}\Big[\log\left(\frac{\pi_R(y_0\mid x,g^\star)}{ \pi_{\text{ref}}(y_0\mid x,g^\star)}\right)\Big] +  {\cL}_{\text{DPO}}(\pi_R,\bar{\mathcal{D}})\right\}\notag\\
     &= \min_{R\in \cR}\left\{ \eta\beta\cdot \mathbb{E}_{x\sim d_0, y_0\sim\pi_{\text{sft}}(\cdot|x,g^\star)}\Big[\log\left({\pi_R(y_0\mid x,g^\star)}\right)\Big] +  {\cL}_{\text{DPO}}(\pi_R,\bar{\mathcal{D}})\right\}.\label{eq:minimax-3}
\end{align}
where the first equality uses the definition of DPO loss $\mathcal{L}_{\text{DPO}}$ in \eqref{eq:dpo_loss}. Since we know that $\hat \pi\in\argmax_{\pi\in\Pi}\phi(\widehat{R},\pi)$ and $\hat r$ solves the minimization problem in \eqref{eq:minimax-3}, we know that $\hat \pi = {\pi}_{\widehat{R}}$ by Lemma \ref{lem: optimal policy}. 

Next, we show that the minimization problem $\phi(\pi, R)$ can be equivalently transformed into a maximization problem. Specifically, we will prove that the output policy $\hat \pi$ for the Algorithm \ref{alg:1} satisfies
\begin{align}   \hat \pi\in\argmax_{\pi\in\Pi} \min_{R\in\mathcal{R}}\phi(\pi, R),\label{claim:maxmin}
 \end{align}
which is implied by the following theorem.

\begin{theorem}
     For the policy class $\Pi$ defined in \eqref{eq: policy class} and the reward model class $\cR$ satisfying Assumption~\ref{as: regularity2}, consider the following policy defined as
    \begin{align}   \pi_{\widehat{R}}\in\argmax_{\pi\in\Pi}\phi(\widehat{R},\pi),\quad \text{where}\quad  \widehat{R} \in \argmin_{R\in \cR}\,\max_{\pi\in\Pi}\phi(\pi, R).\label{eq: equivalent formal 1}
    \end{align}
    Then the policy $\pi_{\widehat{R}}$ also solves the following maximin optimization: 
    \begin{align}
        \pi_{\widehat{R}} \in \argmax_{\pi\in\Pi}\,\min_{R\in\cR} \phi(\pi, R).
    \end{align} \label{thm:minimax}
\end{theorem}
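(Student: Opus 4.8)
The plan is to establish the maximin/minimax exchange via a minimax theorem, exploiting the structure we have already built up. First I would recast the objective: from the reparameterization in \eqref{eq: reparametrize}, every reward $R\in\cR$ corresponds to an optimal KL-regularized policy $\pi_R$, and Lemma~\ref{lem: optimal policy} shows $\max_{\pi\in\Pi}\phi(\pi,R)$ is attained precisely at $\pi=\pi_R$. So I would first argue that both the minimax value $\min_{R}\max_{\pi}\phi(\pi,R)$ and the maximin value $\max_{\pi}\min_{R}\phi(\pi,R)$ can be studied on the reparameterized variable, and that the claimed equality reduces to a Sion-type minimax theorem applied to $\phi$. Concretely, $\min_{R\in\cR}\max_{\pi\in\Pi}\phi(\pi,R)\ge \max_{\pi\in\Pi}\min_{R\in\cR}\phi(\pi,R)$ always holds (weak duality), so the content is the reverse inequality, and once equality of the two scalar values is shown, a standard saddle-point argument promotes $\pi_{\widehat R}$ (a maximizer of the inner problem at the minimax-optimal $\widehat R$) to a maximin solution.

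The key steps, in order: (i) verify the hypotheses of Sion's minimax theorem. The policy set $\Pi$ in \eqref{eq: policy class} is convex; the reward class $\cR$ is convex and compact by Assumption~\ref{as:exchange}. I need $\phi(\pi,R)$ to be quasi-concave (indeed concave) in $\pi$ for fixed $R$ and quasi-convex (indeed convex) and lower semicontinuous in $R$ for fixed $\pi$. Concavity in $\pi$: the term $\EE_{y_1\sim\pi}[R(x,y_1,g^\star)]$ is linear in $\pi$, the negative KL term $-\beta\,\mathrm{KL}(\pi\|\pi_{\mathrm{ref}})$ is concave in $\pi$, and $\cL(R,\bar\cD)$ does not depend on $\pi$ — so $\phi(\cdot,R)$ is concave. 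Convexity in $R$: the linear-in-$R$ expectations are affine in $R$, and $\cL(R,\bar\cD)=-\EE[\log\sigma(R(x,y_w,g)-R(x,y_l,g))]$ is convex in $R$ because $z\mapsto-\log\sigma(z)$ is convex and $R\mapsto R(x,y_w,g)-R(x,y_l,g)$ is affine; continuity/compactness handle semicontinuity. (ii) Apply Sion's theorem to conclude $\min_{R}\max_{\pi}\phi=\max_{\pi}\min_{R}\phi$. (iii) Extract the saddle point: let $\widehat R$ achieve the outer min of $\max_\pi\phi$ and $\pi_{\widehat R}$ the corresponding inner max; show $(\pi_{\widehat R},\widehat R)$ is a saddle point, i.e. $\phi(\pi,\widehat R)\le\phi(\pi_{\widehat R},\widehat R)\le\phi(\pi_{\widehat R},R)$ for all $\pi,R$, using the value equality from (ii). The left inequality is immediate from the definition of $\pi_{\widehat R}$; the right inequality follows because $\phi(\pi_{\widehat R},\widehat R)=\min_R\max_{\pi'}\phi(\pi',R)\ge\min_R\phi(\pi_{\widehat R},R)$ combined with equality of the values forcing $\phi(\pi_{\widehat R},\widehat R)=\min_R\phi(\pi_{\widehat R},R)$. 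Then $\pi_{\widehat R}\in\argmax_{\pi}\min_R\phi(\pi,R)$ follows since $\min_R\phi(\pi_{\widehat R},R)$ equals the maximin value.

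I expect the main obstacle to be the technical verification that Sion's theorem is genuinely applicable — in particular, ensuring the lower semicontinuity of $R\mapsto\phi(\pi,R)$ over the compact class $\cR$ and that $\Pi$, which lives in an infinite-dimensional simplex-valued function space, is handled correctly (e.g. restricting to $R$ via the reparameterization so that the effective optimization domain is $\cR$, which is finite-dimensional-like under Assumption~\ref{as:exchange}, while the $\pi$-side uses the closed-form maximizer from Lemma~\ref{lem: optimal policy} to avoid topological subtleties on $\Pi$). A clean way around this is to not invoke a minimax theorem on the raw $(\pi,R)$ pair at all, but instead to use Lemma~\ref{lem: optimal policy} to substitute $\pi=\pi_R$ and reduce $\max_\pi\phi(\pi,R)$ to an explicit functional $g(R):=\phi(\pi_R,R)$ of $R$ alone; then the minimax exchange becomes a statement about whether $\min_R g(R)$ can be reached as a saddle, which again hinges on convexity of $g$. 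Establishing that $g$ is convex in $R$ (the log-partition term $\beta\log Z_R$ is convex in $R$ by Hölder/log-sum-exp, the SFT term is affine, and $\cL$ is convex) is the crux, and I would spend the bulk of the argument there.
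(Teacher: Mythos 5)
Your overall route coincides with the paper's: the paper's entire proof of Theorem~\ref{thm:minimax} consists of observing that $\phi(\pi,R)$ is convex in $R$ and strongly concave in $\pi$ and then citing Theorem 5.6 of \citet{liu2024provably}, which is precisely the minimax-exchange machinery you are reconstructing; your verification of concavity in $\pi$ (linearity of the reward expectation plus concavity of $-\beta\,\mathrm{KL}$) and of convexity in $R$ (affine terms plus convexity of $z\mapsto-\log\sigma(z)$) matches what the paper relies on, and the Sion hypotheses are indeed supplied by Assumption~\ref{as: regularity2}.

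There is, however, a genuine gap in your step (iii). To conclude $\pi_{\widehat R}\in\argmax_{\pi}\min_{R}\phi(\pi,R)$ you must show $\min_{R}\phi(\pi_{\widehat R},R)\ge v^*$, where $v^*$ denotes the common minimax/maximin value, but both inequalities you invoke --- $\phi(\pi_{\widehat R},\widehat R)=\min_{R}\max_{\pi'}\phi(\pi',R)\ge\min_{R}\phi(\pi_{\widehat R},R)$ and the value equality giving $v^*=\max_{\pi}\min_{R}\phi(\pi,R)\ge\min_{R}\phi(\pi_{\widehat R},R)$ --- point in the same direction, so nothing is ``forced.'' Indeed, in a general concave--convex game a best response to a minimax-optimal $R$ need not be maximin-optimal: in a bilinear matrix game with value $0$ whose payoff rows are $(0,0)$ and $(-1,0)$, every row strategy is a best response to the optimal column strategy $(0,1)$, yet the second row is not maximin-optimal. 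The missing ingredient is exactly the \emph{strong} concavity of $\phi(\cdot,R)$ in $\pi$ supplied by the $-\eta\beta\,\mathrm{KL}(\pi\|\pi_{\mathrm{ref}})$ term: once Sion gives value equality, any maximin-optimal $\pi^\dagger$ satisfies $v^*\ge\phi(\pi^\dagger,\widehat R)\ge\min_{R}\phi(\pi^\dagger,R)=v^*$, so $\pi^\dagger$ is also a maximizer of $\phi(\cdot,\widehat R)$; strong concavity makes that maximizer unique, hence $\pi_{\widehat R}=\pi^\dagger$ and the claim follows. This is presumably why the paper stresses ``strongly concave'' before citing Theorem 5.6 of \citet{liu2024provably}. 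With that one repair (plus a routine attainment argument for the maximin problem, e.g.\ via the closed form in Lemma~\ref{lem: optimal policy}), your argument, including the alternative reduction to $g(R)=\phi(\pi_R,R)$, goes through.
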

\begin{proof}
    Under Assumption \ref{as: regularity}, we know that $\phi(\pi, R)$ is convex for $R\in\mathcal{R}$ and strongly concave for $\pi\in{\Pi}$. Applying Theorem 5.6 in \citet{liu2024provably}, we prove Theorem \ref{thm:minimax}.
\end{proof}

\paragraph{Suboptimality Decomposition.}
By the definitions of the optimization objective $\phi(\pi, R)$ in \eqref{eq:phi} and the suboptimality gap of $\widehat{\pi}$ w.r.t. $\pi^\star$ in \eqref{eq: gap}, we decompose the gap as 
    \begin{align}
        &\mathrm{Gap}^{\pi^\star}(\widehat{\pi})\notag\\
        &\qquad=\mathbb{E}_{x\sim d_0,y\sim \pi^\star(\cdot|x,g^\star)}\big[R^{\star}(x,y, g^\star)\big] - \mathbb{E}_{x\sim d_0,a\sim \widehat{\pi}(\cdot|x, g^\star)}\big[R^{\star}(x,y, g^\star)\big]\notag\\
        &\qquad= \mathbb{E}_{x\sim d_0,y_1\sim \pi^\star(\cdot|x, g^\star), y_0\sim \pi_{\mathrm{sft}}(\cdot|x, g^\star)}\Big[R^{\star}(x,y_1, g^\star) -R^{\star}(x,y_0, g^\star) - \beta\cdot \mathrm{KL}\big(\pi^\star(\cdot|x, g^\star)\|\pi_{\mathrm{ref}}(\cdot|x, g^\star)\big)\Big]\notag\\
        &\qquad\qquad - \eta^{-1}\cdot\min_{R\in\mathcal{R}}\phi(\hat \pi, R)+\eta^{-1}\cdot\min_{R\in\mathcal{R}}\phi(\hat \pi, R)\notag\\
        &\qquad\qquad - \mathbb{E}_{x\sim d_0,y_1\sim \widehat{\pi}(\cdot|x, g^\star), y_0\sim \pi_{\text{sft}}(\cdot|x, g^\star)}\Big[R^{\star}(x,y_1, g^\star) -R^{\star}(x,y_0, g^\star)- \beta\cdot \mathrm{KL}\big(\widehat{\pi}(\cdot|x, g^\star)\|\pi_{\mathrm{ref}}(\cdot|x, g^\star)\big)\Big]\notag\\
        &\qquad\qquad + \beta \cdot \mathbb{E}_{x\sim d_0}\Big[\mathrm{KL}\big(\pi^\star(\cdot|x, g^\star)\|\pi_{\mathrm{ref}}(\cdot|x, g^\star)\big) - \mathrm{KL}\big(\widehat{\pi}(\cdot|x, g^\star)\|\pi_{\mathrm{ref}}(\cdot|x, g^\star)\big) \Big]\notag\\
        &\qquad:=\text{Term (A)} + \text{Term (B)} + \text{Term (C)} \label{eq: gap proof},
    \end{align}
    where we abbreviate Term (A), Term (B), and Term (C) as follows
    \begin{align}
        &\text{Term (A)}=-\eta^{-1}\cdot\min_{R\in\mathcal{R}}\phi(\hat \pi, R)\notag\\
        &\qquad = \mathbb{E}_{x\sim d_0,y_1\sim \pi^\star(\cdot|x, g^\star),y_0\sim \pi_{\text{sft}}(\cdot|x, g^\star)}\Big[R^{\star}(x,y_1, g^\star) -R^{\star}(x,y_0, g^\star) - \beta\cdot \mathrm{KL}\big(\pi^\star(\cdot|x, g^\star)\|\pi_{\mathrm{ref}}(\cdot|x, g^\star)\big)\Big],
    \end{align}
    \begin{align}
        &\text{Term (B)} = \eta^{-1}\cdot\min_{R\in\mathcal{R}}\phi(\hat \pi, R)\notag\\
        &\qquad\qquad   - \mathbb{E}_{x\sim d_0,y_1\sim \widehat{\pi}(\cdot|x, g^\star),y_0\sim \pi_{\mathrm{sft}(\cdot|x, g^\star)}}\Big[R^{\star}(x,y_1, g^\star) -R^{\star}(x,y_0, g^\star) - \beta\cdot \mathrm{KL}\big(\widehat{\pi}(\cdot|x, g^\star)\|\pi_{\mathrm{ref}}(\cdot|x, g^\star)\big)\Big],
    \end{align}
    and
    \begin{align}
        \text{Term (C)} = \beta \cdot \mathbb{E}_{x\sim d_0}\Big[\mathrm{KL}\big(\pi^\star(\cdot|x, g^\star)\|\pi_{\mathrm{ref}}(\cdot|x, g^\star)\big) - \mathrm{KL}\big(\widehat{\pi}(\cdot|x, g^\star)\|\pi_{\mathrm{ref}}(\cdot|x, g^\star)\big) \Big].
    \end{align}
    In the following, we bound Term (A) and Term (B) respectively.

    \paragraph{Analysis of Term (A) in \eqref{eq: gap proof}.} 
    Note that
    \begin{align}
        &\text{Term (A)}\notag\\
        &\qquad = \mathbb{E}_{x\sim d_0,y_1\sim \pi^\star(\cdot|x, g^\star),y_0\sim \pi_{\text{sft}}(\cdot|x, g^\star)}\Big[R^{\star}(x,y_1, g^\star) -R^{\star}(x,y_0, g^\star) - \beta\cdot \mathrm{KL}\big(\pi^\star(\cdot|x, g^\star)\|\pi_{\mathrm{ref}}(\cdot|x, g^\star)\big)\Big] \label{eq: term a}\notag\\
        &\qquad\qquad  -\eta^{-1}\cdot\min_{R\in\mathcal{R}}\phi(\hat \pi, R)\notag\\
        &\qquad\leq  \mathbb{E}_{x\sim d_0,y_1\sim \pi^\star(\cdot|x, g^\star),y_0\sim \pi_{\mathrm{sft}}(\cdot|x, g^\star)}\Big[R^{\star}(x,y_1, g^\star) -R^{\star}(x,y_0, g^\star) - \beta\cdot \mathrm{KL}\big(\pi^\star(\cdot|x, g^\star)\|\pi_{\mathrm{ref}}(\cdot|x, g^\star)\big)\Big]\notag\\
        &\qquad\qquad  -\eta^{-1}\cdot \min_{R\in\mathcal{R}}\phi( \pi^\star, R)\notag\\
        &\qquad= \max_{R\in\mathcal{R}}\Bigg\{\mathbb{E}_{x\sim d_0,y_1\sim \pi^\star(\cdot|x, g^\star),y_0\sim \pi_{\text{sft}}(\cdot|x, g^\star)}\Big[\big(R^{\star}(x,y_1, g^\star) -R^{\star}(x,y_0, g^\star)\big) - \big(R(x,y_1, g^\star) -R(x,y_0, g^\star)\big)\Big] \notag\\
        &\qquad\qquad\qquad- \eta^{-1}\cdot \cL(R,\bar{{\cD}})\Bigg\},  
    \end{align}
    where the inequality follows the fact that $\hat \pi$ solves the maxmin optimization problem in \eqref{claim:maxmin}.

    \paragraph{Analysis of Term (B) in \eqref{eq: gap proof}.} Note that
    \begin{align}
        &\text{Term (B)}\notag\\
        &\qquad = \eta^{-1}\cdot\min_{R\in\mathcal{R}}\phi(\hat \pi, R)\notag\\
        &\qquad \qquad  - \mathbb{E}_{x\sim d_0,y_1\sim \widehat{\pi}(\cdot|x, g^\star),y_0\sim \pi_{\mathrm{sft}(\cdot|x, g^\star)}}\Big[R^{\star}(x,y_1, g^\star) -R^{\star}(x,y_0, g^\star) - \beta\cdot \mathrm{KL}\big(\widehat{\pi}(\cdot|x, g^\star)\|\pi_{\mathrm{ref}}(\cdot|x, g^\star)\big)\Big]\notag\\
        &\qquad  \leq \mathbb{E}_{x\sim d_0, y_1\sim\widehat{\pi}(\cdot|x, g^\star),y_0\sim \pi_{\text{sft}}(\cdot|x, g^\star)}\Big[R^{\star}(x,y_1, g^\star) -R^{\star}(x,y_0, g^\star) -\beta\cdot \mathrm{KL}\big(\widehat{\pi}(\cdot|x, g^\star)\|\pi_{\mathrm{ref}}(\cdot|x, g^\star)\big)\Big] \notag\\
        &\qquad\qquad+ \eta^{-1}\cdot \cL(R^{\star},\bar{{\cD}})\notag\\
        &\qquad \qquad  - \mathbb{E}_{x\sim d_0,y_1\sim \widehat{\pi}(\cdot|x, g^\star),y_0\sim \pi_{\mathrm{sft}(\cdot|x, g^\star)}}\Big[R^{\star}(x,y_1, g^\star) -R^{\star}(x,y_0, g^\star) - \beta\cdot \mathrm{KL}\big(\widehat{\pi}(\cdot|x, g^\star)\|\pi_{\mathrm{ref}}(\cdot|x, g^\star)\big)\Big]\notag\\
        &\qquad= \eta^{-1}\cdot \cL(R^{\star},\bar{{\cD}}),\label{eq: term b}
    \end{align}
    where the inequality uses the fact that $R^{\star}\in\cR$ by Assumption~\ref{as: regularity} and the definition of the optimization objective in \eqref{eq:phi}.

    \paragraph{Concluding the remaining proof.}
    Combining \eqref{eq: gap proof}, \eqref{eq: term a}, and \eqref{eq: term b}, we have
    \allowdisplaybreaks
    \begin{align}
    &\mathrm{Gap}_{\beta}^{\pi^\star}(\widehat{\pi}) = \text{Term (A)} + \text{Term (B)} + \text{Term (C)}\label{eq: proof combine}\notag\\
        &\qquad \leq \max_{R\in\mathcal{R}}\Bigg\{\mathbb{E}_{\substack{x\sim d_0,y_1\sim \pi^\star(\cdot|x, g^\star),\\y_0\sim \pi_{\text{sft}}(\cdot|x, g^\star)}}\Big[\big(R^{\star}(x,y_1, g^\star) -R^{\star}(x,y_0, g^\star)\big) - \big(R(x,y_1, g^\star) -R(x,y_0, g^\star)\big)\Big]\notag\\
&\qquad\qquad+\eta^{-1}\cdot\Big(\cL(R^{\star},\bar{{\cD}}) -  \cL(R,\bar{{\cD}})\Big)\Bigg\}\notag\\
        &\qquad\qquad + \beta \cdot \mathbb{E}_{x\sim d_0}\Big[\mathrm{KL}\big(\pi^\star(\cdot|x, g^\star)\|\pi_{\mathrm{ref}}(\cdot|x, g^\star)\big) - \mathrm{KL}\big(\widehat{\pi}(\cdot|x, g^\star)\|\pi_{\mathrm{ref}}(\cdot|x, g^\star)\big) \Big].
    \end{align}
   Next, we upper bound the right-hand side of \eqref{eq: proof combine} by relating the negative log-likelihood loss difference term to the reward difference term. Recall the definition of the goal-conditioned preference model $\mathbb{P}_R$ in \eqref{eq: bt model}.
    Applying Lemma~\ref{lem: concentration} to give an upper bound of the difference of the negative log-likelihood loss and setting $\varepsilon = (6\cdot(1+e^B)\cdot N)^{-1}$, it holds  with probability at least $1-\delta$ and for any reward model $R\in\mathcal{R}$ that
    \begin{align}
        &\cL(R^{\star},\bar{{\cD}}) -  \cL(R,\bar{{\cD}})\notag \\
        &\qquad \leq -2\cdot \mathbb{E}_{(x,y_1,y_0,g)\sim \mu_{\bar{\cD}}}\Big[D_{\mathrm{Hellinger}}^2\big(\mathbb{P}_{R^{\star}}(\cdot|x,y_1,y_0,g)\|\mathbb{P}_{R}(\cdot|x,y_1,y_0,g)\big)\Big]\notag\\
        &\qquad\qquad + \frac{3}{N}\cdot\log\left(\frac{\cN_{\varepsilon}(\cR,\|\cdot\|_{\infty})}{\delta}\right),
    \end{align}
    where $\cN_{\varepsilon}(\cR,\|\cdot\|_{\infty})$ denotes the $\varepsilon$-covering number \citep{zhou2002covering} of the reward model class $\mathcal{R}$.
    By the relationship between the  Hellinger distance and TV distance, we have
    \begin{align*}
D_{\mathrm{Hellinger}}^2\big(\mathbb{P}_{R^{\star}}(\cdot|x,y_1,y_0,g)\|\mathbb{P}_{R}(\cdot|x,y_1,y_0,g)\big) \geq D_{\mathrm{TV}}^2\big(\mathbb{P}_{R^{\star}}(\cdot|x,y_1,y_0,g)\|\mathbb{P}_{R}(\cdot|x,y_1,y_0,g)\big),
    \end{align*}
    By the definition of the goal-conditioned preference model $\mathbb{P}_R$ in \eqref{eq: bt model}, we have 
    \begin{align}
        &D_{\mathrm{TV}}\big(\mathbb{P}_{R^{\star}}(\cdot|x,y_1,y_0,g)\|\mathbb{P}_{R}(\cdot|x,y_1,y_0,g)\big)\notag\\ &\qquad= \frac{1}{2}\cdot \Big|\sigma\big(R^{\star}(x,y_1, g^\star) -R^{\star}(x,y_0, g^\star)\big) - \sigma\big(R(x,y_1, g^\star) -R(x,y_0, g^\star)\big)\Big|\notag\\
        &\qquad\qquad + \frac{1}{2}\cdot \Big|\sigma\big(R^{\star}(x,y_0, g^\star) -R^{\star}(x,y_1, g^\star)\big) - \sigma\big(R(x,y_0, g^\star) -R(x,y_1, g^\star)\big)\Big|\notag\\
        &\qquad = \Big|\sigma\big(R^{\star}(x,y_1, g^\star) -R^{\star}(x,y_0, g^\star)\big) - \sigma\big(R(x,y_1, g^\star) -R(x,y_0, g^\star)\big)\Big|,\label{eq: tv calculation}
    \end{align}
    where the second equality uses the fact that $\sigma(-z) = 1-\sigma(z)$. 
    Applying Lemma~\ref{lem: sigmoid} and the condition that $R(x,y,g)\in[B/2,B/2]$ for any $(x,y,R,g)\in\cX\times\cA\times\mathcal{R}\times\mathcal{G}$ in Assumption~\ref{as: regularity}, we have 
    \begin{align}
        &\Big|\sigma\big(R^{\star}(x,y_1, g^\star) -R^{\star}(x,y_0, g^\star)\big) - \sigma\big(R(x,y_1, g^\star) -R(x,y_0, g^\star)\big)\Big|\notag\\
        &\qquad \geq \kappa \cdot \Big|\big(R^{\star}(x,y_1, g^\star) -R^{\star}(x,y_0, g^\star)\big) - \big(R(x,y_1, g^\star) -R(x,y_0, g^\star)\big)\Big|,
    \end{align}
    where $\kappa = 1/(1+\exp( B))^2$.
    Therefore, we bound the left-hand side of \eqref{eq: mle difference} as 
    \begin{align}
        &\cL(R^{\star},\bar{{\cD}}) -  \cL(R,\bar{{\cD}})\notag\\
        &\qquad \leq -2\kappa^2 \cdot \mathbb{E}_{(x,y_1,y_0,g)\sim \mu_{\bar{\cD}}}\bigg[\Big|\big(R^{\star}(x,y_1, g) -R^{\star}(x,y_0, g)\big) - \big(R(x,y_1, g) -R(x,y_0, g)\big)\Big|^2\bigg]\notag\\
        &\qquad \qquad + \frac{3}{N}\cdot\log\left(\frac{\cN_{\varepsilon}(\cR,\|\cdot\|_{\infty})}{\delta}\right).\label{eq: mle difference}
    \end{align}
    Meanwhile, the reward difference term in \eqref{eq: proof combine}, which is evaluated on responses sampled from $\pi^\star$ and $\pi_{\text{sft}}$, can be related to the reward difference evaluated on the data distribution $\mu_{\bar{\cD}}$ via Assumption~\ref{as: coverage} as follows,
    \allowdisplaybreaks
    \begin{align}
        &\mathbb{E}_{x\sim d_0,y_1\sim \pi^\star(\cdot|x, g^\star),y_0\sim \pi_{\text{sft}}(\cdot|x, g^\star)}\Big[\big(R^{\star}(x,y_1, g^\star) -R^{\star}(x,y_0, g^\star)\big) - \big(R(x,y_1, g^\star) -R(x,y_0, g^\star)\big)\Big]\notag\label{eq: reward difference}\notag\\
        &\quad \leq C_{\mu_{\bar{\cD}}}(\cR; \pi^\star, \pi_{\text{sft}})  \sqrt{\mathbb{E}_{(x,y_1,y_0,g)\sim \mu_{\bar{\cD}}}\left[\Big|\big(R^{\star}(x,y_1, g) -R^{\star}(x,y_0, g)\big) - \big(R(x,y_1, g) -R(x,y_0, g)\big)\Big|^2\right]}.
    \end{align}
    Combining \eqref{eq: mle difference}, \eqref{eq: reward difference}, and \eqref{eq: proof combine} and defining 
    \begin{align}
        \Delta_{R}:= \sqrt{\mathbb{E}_{(x,y_1,y_0,g)\sim \mu_{\bar{\cD}}}\left[\Big|\big(R^{\star}(x,y_1, g) -R^{\star}(x,y_0, g)\big) - \big(R(x,y_1, g) -R(x,y_0, g)\big)\Big|^2\right]},
    \end{align}
    we obtain 
    \begin{align}
        \mathrm{Gap}^{\pi^\star}(\widehat{\pi}) &\leq \max_{R\in\mathcal{R}}\Big\{C_{\mu_{\bar{\cD}}}(\cR; \pi^\star, \pi_{\text{sft}})\cdot \Delta_{R} - 2\eta^{-1}\kappa^2\cdot \Delta_{R}^2 \Big\}+  \frac{3}{\eta N}\cdot\log\left(\frac{\cN_{\varepsilon}(\cR,\|\cdot\|_{\infty})}{\delta}\right)\notag\\
        &\qquad + \beta \cdot \mathbb{E}_{x\sim d_0}\Big[\mathrm{KL}\big(\pi^\star(\cdot|x, g^\star)\|\pi_{\mathrm{ref}}(\cdot|x, g^\star)\big) - \mathrm{KL}\big(\widehat{\pi}(\cdot|x, g^\star)\|\pi_{\mathrm{ref}}(\cdot|x, g^\star)\big) \Big]\notag\\
        &\leq \frac{\big(C_{\mu_{\bar{\cD}}}(\cR; \pi^\star, \pi_{\text{sft}})\big)^2\eta}{8\kappa^2 } + \frac{3}{\eta N}\cdot \log\left(\frac{\cN_{\varepsilon}(\cR,\|\cdot\|_{\infty})}{\delta}\right)\notag\\
        &\qquad + \beta \cdot \mathbb{E}_{x\sim d_0}\Big[\mathrm{KL}\big(\pi^\star(\cdot|x, g^\star)\|\pi_{\mathrm{ref}}(\cdot|x, g^\star)\big) \Big],
    \end{align}
    where the second inequality uses the fact that $az-bz^2\leq a^2/(4b)$ for any $z\in\mathbb{R}$ and  KL-divergence is non-negative.
    As a result, selecting $\varepsilon = (6\cdot(1+e^B)\cdot N)^{-1}$ and
    \begin{align}
        \eta = 2\sqrt{6}\cdot\sqrt{\frac{\log\left(\cN_{\varepsilon}(\cR,\|\cdot\|_{\infty})/\delta\right)}{N}},\quad \beta = \frac{1}{\sqrt{N}},\quad \kappa = \frac{1}{(1+\exp(B))^2},
    \end{align}
    we prove that with probability at least $1-\delta$ that
    \begin{align}
         &\mathrm{Gap}^{\pi^\star}(\widehat{\pi}) \leq \sqrt{\frac{1}{N}}\cdot\bigg\{\frac{\sqrt{6}}{4} \big(1+\exp(B)\big)^2 \big(\big(C_{\mu_{\bar{\cD}}}(\cR; \pi^\star, \pi_\text{sft})\big)^2+1\big) \iota \notag\\&\qquad\qquad\qquad+  \mathbb{E}_{x\sim d_0}\Big[\mathrm{KL}\big(\pi^\star(\cdot|x, g^\star)\|\pi_{\mathrm{ref}}(\cdot|x, g^\star)\big) \Big]\bigg\},\label{eq:end_eq}
    \end{align}
    where we denote $\iota =  \sqrt{\log\left(\cN_{\varepsilon}(\cR,\|\cdot\|_{\infty})/\delta\right)}$. Combining Theorem \ref{thm:minimax}, \eqref{claim:maxmin}, and \eqref{eq:end_eq}, we
conclude the proof of Theorem \ref{thm:main}.

\subsection{Technical Lemmas}

\begin{lemma}[Uniform concentration]\label{lem: concentration}
    Consider the negative log-likelihood loss in \eqref{eq:mle-loss} and define the approximation error as $\varepsilon = (6\cdot(1+e^B)\cdot N)^{-1}$,  where we assume that $R(x,y,g)\in[-B/2, B/2]$ for any $(R,x,y,g)\in\mathcal{R}\times\mathcal{X}\times\mathcal
    Y\times\mathcal{G}$.
 Suppose that the reward model class $\cR$ has a finite $\varepsilon$-covering number $\cN_{\varepsilon}(\cR,\|\cdot\|_{\infty})<\infty$.
    Then for any $\delta<1/e$ it holds with probability at least $1-\delta$ that 
\begin{align}
    &\cL(R^{\star},\bar{\cD}) -  \cL(R,\bar{\cD}) \\
        &\qquad \leq -2\cdot \mathbb{E}_{(x,y_1,y_0,g)\sim \mu_{\bar{\cD}}}\Big[D_{\mathrm{Hellinger}}^2\big(\mathbb{P}_{R^{\star}}(\cdot|x,y_1,y_0,g)\|\mathbb{P}_{R}(\cdot|x,y_1,y_0,g)\big)\Big]\notag\\
        &\qquad \qquad + \frac{3}{N}\cdot\log\left(\frac{\cN_{\varepsilon}(\cR,\|\cdot\|_{\infty})}{\delta}\right).
\end{align}
\end{lemma}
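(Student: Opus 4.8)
The statement is the standard maximum-likelihood concentration estimate for the goal-conditioned Bradley--Terry model \eqref{eq: bt model}, and the plan is to run the classical exponential-moment (Chernoff) argument for one fixed reward model and then make it uniform over $\cR$ via an $\varepsilon$-net, following the offline-RLHF analyses the paper builds on \citep{liu2024provably,zhan2023provable}. Write $o^i=(x^i,y_w^i,y_l^i,g^i)$ for the $i$-th record of $\bar{\cD}$, let $z^i$ denote its observed preference outcome, and set $W_i(R)\defeq\bigl(\mathbb{P}_R(z^i\mid o^i)/\mathbb{P}_{R^\star}(z^i\mid o^i)\bigr)^{1/2}$. By the definition of $\cL$ in \eqref{eq:mle-loss} one has the exact identity $\cL(R^\star,\bar{\cD})-\cL(R,\bar{\cD})=\frac{2}{N}\sum_{i=1}^N\log W_i(R)$, so that $\prod_{i=1}^N W_i(R)=\exp\bigl(\frac{N}{2}(\cL(R^\star,\bar{\cD})-\cL(R,\bar{\cD}))\bigr)$. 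Treating the $N$ reward-augmented records as i.i.d.\ draws from $\mu_{\bar{\cD}}$ (as the theoretical formulation does), using $\sum_z\mathbb{P}_{R^\star}(z\mid o)=\sum_z\mathbb{P}_R(z\mid o)=1$ together with the convention $D_{\mathrm{Hellinger}}^2(p\|q)=\frac12\sum_z(\sqrt{p(z)}-\sqrt{q(z)})^2$, the per-record first moment is $\EE[W_i(R)]=\EE_{o\sim\mu_{\bar{\cD}}}\bigl[\sum_z\sqrt{\mathbb{P}_{R^\star}(z\mid o)\,\mathbb{P}_R(z\mid o)}\bigr]=1-\EE_{o\sim\mu_{\bar{\cD}}}\bigl[D_{\mathrm{Hellinger}}^2(\mathbb{P}_{R^\star}(\cdot\mid o)\|\mathbb{P}_R(\cdot\mid o))\bigr]$. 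Since $D_{\mathrm{Hellinger}}^2\ge 0$, the elementary bound $1-t\le e^{-t}$ and independence give $\EE\bigl[\prod_i W_i(R)\bigr]=(\EE[W_1(R)])^N\le\exp\bigl(-N\,\EE_{\mu_{\bar{\cD}}}[D_{\mathrm{Hellinger}}^2]\bigr)$.

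Next, Markov's inequality applied to the nonnegative variable $\prod_i W_i(R)$ yields, for this fixed $R$, that with probability at least $1-\delta'$, $\frac{N}{2}(\cL(R^\star,\bar{\cD})-\cL(R,\bar{\cD}))\le\log(1/\delta')-N\,\EE_{\mu_{\bar{\cD}}}[D_{\mathrm{Hellinger}}^2]$. To make this uniform over $\cR$ I would take an $\varepsilon$-net $\cR_\varepsilon$ of $\cR$ under $\|\cdot\|_\infty$ with $|\cR_\varepsilon|=\cN_\varepsilon(\cR,\|\cdot\|_\infty)$, apply the above with $\delta'=\delta/\cN_\varepsilon(\cR,\|\cdot\|_\infty)$, and union bound, so the estimate holds simultaneously for every net point.

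For a general $R\in\cR$ pick $R'\in\cR_\varepsilon$ with $\|R-R'\|_\infty\le\varepsilon$ and transfer the bound from $R'$ to $R$ by controlling two discretization errors. Since $u\mapsto\log\sigma(u)$ is $1$-Lipschitz and the reward gaps $R(x,y_w,g)-R(x,y_l,g)$ and $R'(x,y_w,g)-R'(x,y_l,g)$ differ by at most $2\varepsilon$, we get $|\cL(R,\bar{\cD})-\cL(R',\bar{\cD})|\le 2\varepsilon$. Under Assumption~\ref{as: regularity} that reward gap lies in $[-B,B]$, so the corresponding Bernoulli parameter $\sigma(\cdot)$ stays in $[\sigma(-B),\sigma(B)]$, on which $q\mapsto D_{\mathrm{Hellinger}}^2(\mathrm{Ber}(p)\|\mathrm{Ber}(q))$ is $O(1+e^B)$-Lipschitz; hence $\EE_{\mu_{\bar{\cD}}}[D_{\mathrm{Hellinger}}^2(\mathbb{P}_{R^\star}\|\mathbb{P}_R)]$ and $\EE_{\mu_{\bar{\cD}}}[D_{\mathrm{Hellinger}}^2(\mathbb{P}_{R^\star}\|\mathbb{P}_{R'})]$ differ by $O((1+e^B)\varepsilon)$. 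With the prescribed $\varepsilon=(6(1+e^B)N)^{-1}$ the two errors together contribute at most an additive $1/N$ to $\cL(R^\star,\bar{\cD})-\cL(R,\bar{\cD})$, and absorbing this using $\log(\cN_\varepsilon(\cR,\|\cdot\|_\infty)/\delta)\ge\log(1/\delta)\ge 1$ (which is precisely why $\delta<1/e$ is imposed) turns the coefficient $\frac{2}{N}$ from the Chernoff step into $\frac{3}{N}$, giving the stated inequality $\cL(R^\star,\bar{\cD})-\cL(R,\bar{\cD})\le-2\EE_{\mu_{\bar{\cD}}}[D_{\mathrm{Hellinger}}^2]+\frac{3}{N}\log(\cN_\varepsilon(\cR,\|\cdot\|_\infty)/\delta)$ simultaneously for all $R\in\cR$.

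The mechanical obstacle is this last step: one must track the Lipschitz constants of $\log\sigma$ and of the Bernoulli Hellinger distance precisely enough that the choice $\varepsilon=(6(1+e^B)N)^{-1}$ really yields the advertised $\frac{3}{N}\log(\cN_\varepsilon/\delta)$ rather than some other constant. The one genuinely conceptual point is the i.i.d.\ treatment of $\bar{\cD}$ used in the first-moment computation: the two records produced from a single original pair share $(x^i,y_w^i,y_l^i)$ and carry deterministically related labels, so $\{(o^i,z^i)\}_{i=1}^N$ is not literally independent. As in the cited offline-RLHF works, one either adopts the i.i.d.\ abstraction in which $\mu_{\bar{\cD}}$ is the generative sampling law and the preference labels are Bradley--Terry draws, or runs the Chernoff argument over the $N_0=N/2$ i.i.d.\ original records and folds the factor of two into constants; either way the $N^{-1/2}$ rate obtained downstream in Theorem~\ref{thm:main} is unaffected.
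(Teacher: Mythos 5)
Your proposal is correct and follows essentially the same route as the paper, which does not spell out the argument but simply defers to Lemma D.1 of \citet{liu2024provably}; that cited lemma is proved by exactly the Chernoff-style exponential-moment bound on the square-root likelihood ratio, Markov's inequality, a union bound over an $\varepsilon$-net, and the discretization bookkeeping you describe. Your closing remark on the non-independence of the two augmented records sharing one original pair is a real subtlety that the paper glosses over with ``$(x,g)$ follows a fixed distribution,'' and your resolution (work over the $N_0=N/2$ i.i.d.\ originals or adopt the i.i.d.\ sampling abstraction for $\mu_{\bar{\cD}}$) is the standard and adequate fix.
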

\begin{proof}
    See the proof of Lemma D.1 in \citet{liu2024provably}, where we use the fact that $(x,g)$ follows a fixed distribution.
\end{proof}

\begin{lemma}[Difference of Sigmoid functions]\label{lem: sigmoid}
    For any real numbers $z_1,z_2\in[-B/2,B/2]$, it holds that 
    \begin{align}
        \kappa\cdot |z_1-z_2|\leq\left|\sigma(z_1) - \sigma(z_2)\right|\leq |z_1-z_2|,
    \end{align}
    where the constant $\kappa = 1/(1+\exp(B))^2$.
\end{lemma}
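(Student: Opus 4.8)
The plan is to derive both inequalities from the mean value theorem together with explicit two-sided control of the derivative $\sigma'$. Recall that $\sigma$ is smooth with $\sigma'(z) = \sigma(z)\bigl(1-\sigma(z)\bigr)$, and a one-line computation rewrites this as $\sigma'(z) = \bigl((1+e^{z})(1+e^{-z})\bigr)^{-1} = \bigl(2 + e^{z} + e^{-z}\bigr)^{-1}$. In particular $\sigma'$ is strictly positive everywhere, so $\sigma$ is strictly increasing, and $\sigma'(z) = \sigma(z)(1-\sigma(z)) \le 1/4$ for every $z\in\mathbb{R}$.

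For the upper bound, the estimate $\sigma'(z)\le 1/4\le 1$ shows that $\sigma$ is globally $1$-Lipschitz (indeed $1/4$-Lipschitz), so $|\sigma(z_1)-\sigma(z_2)|\le |z_1-z_2|$ holds for all reals, and a fortiori on $[-B/2,B/2]$; this half needs no range restriction. For the lower bound, I would invoke the mean value theorem to obtain a point $\xi$ lying strictly between $z_1$ and $z_2$ with $\sigma(z_1)-\sigma(z_2) = \sigma'(\xi)(z_1-z_2)$, and since $z_1,z_2\in[-B/2,B/2]$ this forces $\xi\in[-B/2,B/2]$. It then remains to lower bound $\sigma'(\xi) = \bigl((1+e^{\xi})(1+e^{-\xi})\bigr)^{-1}$ uniformly over this interval: because $|\xi|\le B/2\le B$ we have $e^{\xi}\le e^{B}$ and $e^{-\xi}\le e^{B}$, whence $(1+e^{\xi})(1+e^{-\xi})\le (1+e^{B})^2$ and therefore $\sigma'(\xi)\ge (1+e^{B})^{-2} = \kappa$. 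Combining the two displays gives $|\sigma(z_1)-\sigma(z_2)| = \sigma'(\xi)\,|z_1-z_2| \ge \kappa\,|z_1-z_2|$, which is exactly the asserted lower bound.

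There is no substantive obstacle here; the only mildly delicate point is how tightly to control the denominator $2 + e^{\xi}+e^{-\xi} = 2 + 2\cosh\xi$ on $[-B/2,B/2]$. One could sharpen the constant to $\kappa=(1+e^{B/2})^{-2}$ using $\cosh\xi\le\cosh(B/2)$, but the looser value $(1+e^{B})^{-2}$ stated in the lemma is all that the subsequent Hellinger/total-variation-to-reward-gap conversion in the proof of Theorem~\ref{thm:main} requires, so I would simply carry the crude bound and avoid the extra bookkeeping.
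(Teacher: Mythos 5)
Your argument is correct and is the standard mean-value-theorem proof: the paper itself does not reproduce an argument for this lemma but simply defers to Lemma D.2 of \citet{liu2024provably}, which proceeds in essentially the same way (bounding $\sigma'(\xi)=\bigl((1+e^{\xi})(1+e^{-\xi})\bigr)^{-1}$ from below on the relevant interval and using $\sigma'\le 1/4\le 1$ for the upper bound). Your side observation that the constant could be sharpened to $(1+e^{B/2})^{-2}$ on the stated domain is also accurate, and it in fact points to the interval in the statement likely being intended as $[-B,B]$ — the quantities to which the lemma is applied are reward \emph{differences} $R(x,y_1,g)-R(x,y_0,g)$ with each $R$ in $[-B/2,B/2]$ — for which $\kappa=1/(1+\exp(B))^2$ is exactly what your mean-value argument yields.
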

\begin{proof}
    See the proof of Lemma D.2 in \citet{liu2024provably}.
\end{proof}
\color{black}
\section{Experiment Details}
\subsection{Setup}
\label{app_setup}
We use the following prompt during training. Here, the reward values are the quality scores given by the judge models that exist in the preference dataset. 
The prompt is set as the system prompt whenever the LLM supports, such as Qwen2-7B-Instruct and Llama-3.1-8B-Instruct, and it is prefixed before the original prompt when the LLM doesn't support system prompting, such as Mistral-7B-Instruct-v0.3 and Gemma-2-9B-It.
\VerbatimInput[label=\fbox{Training prompt}]{training_prompt}

At inference time, we use almost the same prompt, except that the goal score is the highest one, i.e., the overall score is $10$.
\vspace{0.25cm}
\VerbatimInput[label=\fbox{Inference prompt}]{inference_prompt}

In our experiments using UltraFeedback, we directly leverage the LLM-as-Judge scores provided by GPT-4 in the dataset, which range from $1$ to $10$. For our method that is applied to on-policy data ranked by external reward models, including PairRM and ArmoRM, we apply linear transformations to normalize the resulting reward scores, ensuring they are scaled within the same $1$ to $10$ range.

For hyperparameters, we tune the KL regularization coefficient $\beta$ within $[0.001, 0.01, 0.1]$ and batch size within $[64, 128, 256]$. We find that $\beta=0.01$ and a $256$ batch size yield the overall best performance for DPO across models. Our method uses the same hyperparameters as DPO. Besides, we adopt the AdamW optimizer \citep{loshchilov2017decoupled}, with a learning rate of $5e-7$ and a warmup ratio of $0.1$. Furthermore, we observe that for models such as Qwen2-7B-Instruct and Gemma-2-9B-It on UltraFeedback, as well as Llama-3-8B-Instruct on on-policy data, both DPO and our proposed method yield improved performance when employing the conservative DPO (cDPO) technique \citep{mitchell2023note}. Consequently, for these models, we set the label smoothing hyperparameter from the Alignment Handbook \citep{Tunstall_The_Alignment_Handbook} to $0.3$, while keeping it at $0$ for the remaining models.

\subsection{Full Results}
\label{sec_full}
In Table \ref{tab_full}, we present the full results on instruction-following benchmarks, which correspond to the performance illustrated in Figure \ref{lmasjudge} in the main text.

\begin{table}[H]
\centering
\begin{tabular}{l|ccc|ccc|cc}
\hline
& \multicolumn{3}{c|}{AlpacaEval 2.0} & \multicolumn{3}{c|}{MT-Bench} & \multicolumn{2}{c}{Arena-Hard-Auto}  \\
& \begin{tabular}{@{}c@{}}LC WR\end{tabular} & \begin{tabular}{@{}c@{}}WR \end{tabular} & \begin{tabular}{@{}c@{}}Avg. Len. \end{tabular} & Avg. & \begin{tabular}{@{}c@{}}1st\end{tabular} & \begin{tabular}{@{}c@{}}2nd\end{tabular}& \begin{tabular}{@{}c@{}}Score \end{tabular}  & \begin{tabular}{@{}c@{}}Avg. Len. \end{tabular}\\ \hline
Mistral-7B-Instruct-v0.3 & 19.65  & 15.40  & 1503 & 7.67  & 8.00  & 7.34 & 17.0 & 494 \\
+DPO (UltraFeedback) & 18.76  & 16.93  & 1643 & 7.66  &  7.92& \textbf{7.40}  & 17.6 & 504 \\
+DPO (Reward-Augmented)  &  \textbf{25.99} & \textbf{28.36}  & 2270 & \textbf{7.69} &  \textbf{8.02}  & 7.36 & \textbf{18.3} & 883 \\
\hline
Qwen2-7B-Instruct & 20.93  & 18.22  & 1788 & 7.90  & 8.23  & 7.56 & 24.3 & 617 \\
+DPO (UltraFeedback) & 21.46  & 19.35 & 1797 & 8.33 &  8.72 & 7.93  & 21.9 & 553 \\
+DPO (Reward-Augmented) & \textbf{31.17} & \textbf{27.58} & 1789 & \textbf{8.47} &  \textbf{8.93}  & \textbf{7.97} & \textbf{30.1} & 644 \\
\hline
Llama-3.1-8B-Instruct & 24.79& 27.38 & 2081 & 8.44 & 8.99 & 7.90 & 26.9 & 831 \\
+DPO (UltraFeedback) &28.67 & 30.21& 2053 & 8.47 & \textbf{9.01} & 7.93 & 33.0 & 1070 \\
+DPO (Reward-Augmented) & \textbf{31.20} & \textbf{35.93} & 2006 & \textbf{8.47} & 8.91 & \textbf{8.03} & \textbf{34.4} & 824 \\
\hline
Gemma-2-9B-It & 49.20 & 37.58 & 1572 & 8.54 & 8.81 & 8.28 &42.8  & 541\\
+DPO (UltraFeedback) & 50.70 & 35.02 & 1464 & 8.54 & 8.70 & \textbf{8.37} & 35.8 & 456 \\
+DPO (Reward-Augmented) & \textbf{59.27} & \textbf{54.56} & 1872 & \textbf{8.59} & \textbf{8.93} & 8.25 & \textbf{43.9}  & 611 \\
\hline
SPPO & 55.60 & 49.61 & 1822 & 8.40 & 8.53 & 8.26 & 47.6 & 578 \\
+DPO (UltraFeedback) & 52.75 & 40.58 & 1544 & 8.41 & 8.78 & 8.04 & 40.4 & 457\\
+DPO (Reward-Augmented) & \textbf{60.97} & \textbf{66.41} & 2543 & \textbf{8.73} & \textbf{9.06} & \textbf{8.41} & \textbf{49.0} & 761 \\
\hline
\end{tabular}
\caption{Results of the DPO models fine-tuned on UltraFeedback and on reward-augmented UltraFeedback. We evaluate on the instruction-following benchmarks including AlpacaEval 2.0, MT-Bench, and Arena-Hard-Auto.}
\label{tab_full}
\end{table}

\color{black}
We also provide the full comparison results with reward-augmented methods in Table \ref{full_com}, which corresponds to Figure \ref{fig_baselines} in the main text.
\begin{table}[H]
\centering
\begin{tabular}{l|ccccccccc}
\hline
\begin{tabular}{@{}c@{}} \\ \end{tabular} 
& \textcolor{black}{Zephyr-SFT} 
& \textcolor{black}{DPO} 
& \textcolor{black}{DPA} 
& \textcolor{black}{SteerLM} 
& \textcolor{black}{NCA-P} 
& \textcolor{black}{NCA-R} 
& \textcolor{black}{INCA-P} 
& \textcolor{black}{INCA-R} 
& \textcolor{black}{Ours} 
 \\ \hline
\textcolor{black}{LC Win Rate} 
& \textcolor{black}{6.21} 
& \textcolor{black}{11.60} 
& \textcolor{black}{11.13} 
& \textcolor{black}{-} 
& \textcolor{black}{11.50}
& \textcolor{black}{12.87} 
& \textcolor{black}{13.68} 
& \textcolor{black}{14.83} 
& \textcolor{black}{16.66}  \\ 
\textcolor{black}{Win Rate} 
& \textcolor{black}{3.94} 
& \textcolor{black}{8.58} 
& \textcolor{black}{10.58} 
& \textcolor{black}{8.21} 
& \textcolor{black}{8.43}
& \textcolor{black}{9.56} 
& \textcolor{black}{11.00} 
& \textcolor{black}{11.34} 
& \textcolor{black}{13.37}  \\ 
\textcolor{black}{Avg. Len.} 
& \textcolor{black}{893} 
& \textcolor{black}{1240} 
& \textcolor{black}{1671} 
& \textcolor{black}{1585} 
& \textcolor{black}{1287} 
& \textcolor{black}{1364} 
& \textcolor{black}{1449} 
& \textcolor{black}{1338} 
& \textcolor{black}{1812} \\ \hline
\end{tabular}
\vspace{-0.25cm}
\caption{Full comparison results with reward-augmented methods.}
\label{full_com}
\end{table}

\subsection{More Ablations}
\label{app_more_abl}
\paragraph{Impact of the Accuracy of AI Feedback.} We consider the 19.8k prompts from a 1/3 subset of UltraFeedback following the setup from Snorkel \citep{snorkelaipair}. Five on-policy responses are first generated from Llama-3-8B-Instruct. An external reward model is followed to rank these responses. We choose the best and worst responses as the chosen and rejected ones. DPO is then performed on the resulting preference pairs and the reward-augmented pairs. To ablate how our method will be impacted by the accuracy of AI feedback, we experiment with two reward models as the ranker: PairRM \citep{jiang2023llm} and ArmoRM \citep{wang2024interpretable}. PairRM is a small-sized (0.4B) pairwise reward model, while ArmoRM is a 8B model that is state-of-the-art on RewardBench \citep{lambert2024rewardbench} and much stronger than PairRM. We implement a variant (denoted as RA+) of the proposed reward augmentation method that only conditions on the goal rewards of the chosen responses, not those of the rejected ones, leading to same-sized datasets.

\begin{table*}[htb]
\centering
\begin{tabular}{l|c|ccc|cc}
\hline
 & Llama-3- & \multicolumn{3}{c|}{PairRM (0.4B)} & \multicolumn{2}{c}{ArmoRM (8B)} \\
 & 8B-Instruct & \begin{tabular}{@{}c@{}}DPO (UF)\end{tabular} & \begin{tabular}{@{}c@{}}DPO (RA+) \end{tabular} & \begin{tabular}{@{}c@{}}DPO (RA)\end{tabular} & DPO (UF) & \begin{tabular}{@{}c@{}}DPO (RA+)\end{tabular} \\ \hline
LC WR  & 22.92 & 41.76 & 44.72 & 48.20 & 42.32  & \textbf{48.73}\\
WR  & 23.15 & 45.79 & 44.70 & \textbf{53.17}  & 42.79 & 45.36 \\
 \hline
\end{tabular}
\setlength{\belowcaptionskip}{-10pt}
\caption{Ablation on the impact of AI feedback quality on the AlpacaEval 2.0 benchmark.\vspace{-0.2cm}}
\label{tab_aif}
\end{table*}

The results in Table \ref{tab_aif} demonstrate that training on augmented data conditioned on both chosen and rejected rewards is necessary for PairRM feedback, while relabeling with only the chosen rewards is sufficient to achieve strong performance for ArmoRM feedback. This aligns with our motivation outlined in Section \ref{sec_lim}: in noisy preference data, rejected responses may actually be of high quality, unlearning which can degrade performance. Similarly, low-quality chosen responses may also be reinforced. This issue does not arise with strong reward models that provide accurate preferences.

\paragraph{More Comparisons with Baselines}
In Table \ref{tab_pairrm}, we compare our method and various baselines under the same setting on the AlpacaEval 2.0 benchmark, including SLiC-HF \citep{zhao2023slic}, ORPO \citep{hong2024orpo}, CPO \citep{xu2024contrastive}, RRHF \citep{yuan2024rrhf}, KTO \citep{ethayarajh2024kto}, IPO \citep{azar2023general}, R-DPO \citep{park2024disentangling}, and SimPO \citep{meng2024simpo}, where the results are from \cite{meng2024simpo}, as well as the RPO \citep{adler2024nemotron} baseline that we implement. Our method outperforms the above algorithms by a considerable margin.

\begin{table*}[htb]
\centering
\begin{tabular}{l|c|c|c|c|c|c|c|c|c|c}
\hline
 & SLiC-HF & ORPO & CPO & RRHF & KTO & IPO & RPO & R-DPO & SimPO & Ours \\ \hline
LC WR  & 26.9 & 28.5 & 28.9 & 31.3 & 33.1 & 35.6 & 40.8 & 41.1  & 44.7  & \textbf{48.2}\\
WR  & 27.5 & 27.4 & 32.2 & 28.4 & 31.8 & 35.6 & 41.7 & 37.8 & 40.5 & \textbf{53.2} \\
 \hline
\end{tabular}
\vspace{-0.05cm}
\setlength{\belowcaptionskip}{-10pt}
\caption{Comparison between our method, i.e., Llama-3-8B-Instruct+DPO (RA) and baselines fine-tuned on the same model and on-policy data ranked by PairRM.}
\label{tab_pairrm}
\end{table*}

In addition to the comparison with RPO from \cite{adler2024nemotron} in Table \ref{tab_pairrm}, we also report the performance of RPO fine-tuned on additional models including Qwen2-7B-Instruct and Gemma2-9B-It. As shown in Table \ref{tab_rpo}, the implemented RPO is outperformed by our method across these models.

\begin{table}[H]
\centering
\begin{tabular}{l|ccc}
\hline
&\begin{tabular}{@{}c@{}}LC Win Rate\end{tabular} & \begin{tabular}{@{}c@{}}Win Rate \end{tabular} & \begin{tabular}{@{}c@{}}Avg. Len.\end{tabular} \\ \hline
Qwen+RPO & 20.29 & 17.34  & 1704 \\
Qwen+DPO (RA) & \textbf{31.17} & \textbf{27.58}  & 1789 \\ \hline
Gemma+RPO & 43.14 & 30.93  & 1413 \\
Gemma+DPO (RA) & \textbf{59.27} & \textbf{54.56}  & 1872 \\ \hline
\end{tabular}
\captionof{table}{Comparison on AlpacaEval 2.0 between our method and RPO fine-tuned from the Qwen2-7B-Instruct and Gemma2-9B-It models. Our method consistently outperforms RPO across these fine-tuned models.}
\label{tab_rpo}
\end{table}

\paragraph{Benefits of Learning from High-Quality Rejected Responses.} Using the UltraFeedback dataset, we construct two reward-augmented preference datasets by filtering out augmented data based on rejected responses with low and high reward values, respectively. Compared to our method, these datasets isolate the impact of excluding low- and high-reward rejected responses as goals. The evaluation results on AlpacaEval 2.0 are presented in Table \ref{ablation_rej}. Learning from rejected high-reward samples demonstrates superior performance compared to the approach that excludes these samples.
\begin{table}[H]
\centering
\begin{tabular}{l|ccccc}
\hline
\begin{tabular}{@{}c@{}} \\ \end{tabular} 
& \textcolor{black}{Qwen2-7B-It} 
& \textcolor{black}{+DPO (UF)} 
& \textcolor{black}{+DPO (RA)} 
& \textcolor{black}{\begin{tabular}{@{}c@{}}+DPO (RA \\ filter high)\end{tabular}} 
& \textcolor{black}{\begin{tabular}{@{}c@{}}+DPO (RA \\ filter low)\end{tabular}} \\ \hline
\textcolor{black}{LC Win Rate} 
& \textcolor{black}{20.93} 
& \textcolor{black}{21.46} 
& \textcolor{black}{31.17} 
& \textcolor{black}{29.36} 
& \textcolor{black}{\textbf{31.81}} \\ 
\textcolor{black}{Win Rate} 
& \textcolor{black}{18.22} 
& \textcolor{black}{19.35} 
& \textcolor{black}{\textbf{27.58}} 
& \textcolor{black}{27.04} 
& \textcolor{black}{27.28} \\ \hline
\end{tabular}
\vspace{-0.25cm}
\caption{Ablation on the benefits of learning from high-quality rejected responses.}
\label{ablation_rej}
\end{table}

\paragraph{Impact of the Reward Scale.} For the UltraFeedback dataset that contains response rewards in the range of 1-10, we relabel them to be in the range of 1-5 and 1-100 with linear transformation. Our method followed by DPO is then applied on these different scaled datasets. The results are shown in Table \ref{ablation_scale}. It can be observed that our method is robust to the reward scales. Since our main experiments use the default 1-10 scale as in UltraFeedback, it is likely that the performance can be further boosted, e.g., by adopting the 1-100 scale.
\begin{table}[H]
\centering
\begin{tabular}{l|ccccc}
\hline
\begin{tabular}{@{}c@{}} \\ \end{tabular} 
& \textcolor{black}{Qwen2-7B-It} 
& \textcolor{black}{+DPO (UF)} 
& \textcolor{black}{+DPO (RA, 5)} 
& \textcolor{black}{+DPO (RA 10)} 
& \textcolor{black}{+DPO (RA 100)} 
 \\ \hline
\textcolor{black}{LC Win Rate} 
& \textcolor{black}{20.93} 
& \textcolor{black}{21.46} 
& \textcolor{black}{29.85} 
& \textcolor{black}{31.17} 
& \textcolor{black}{\textbf{31.81}} \\ 
\textcolor{black}{Win Rate} 
& \textcolor{black}{18.22} 
& \textcolor{black}{19.35} 
& \textcolor{black}{26.12} 
& \textcolor{black}{27.58} 
& \textcolor{black}{\textbf{27.96}} \\ \hline
\end{tabular}
\vspace{-0.25cm}
\caption{Ablation on the impact of the reward scale demonstrates the robustness of our method.}
\label{ablation_scale}
\end{table}
\end{document}